\documentclass{article}

% if you need to pass options to natbib, use, e.g.:
%     \PassOptionsToPackage{numbers, compress}{natbib}
% before loading neurips_2020

% ready for submission
% \usepackage{neurips_2020}

% to compile a preprint version, e.g., for submission to arXiv, add add the
% [preprint] option:
%     \usepackage[preprint]{neurips_2020}

% to compile a camera-ready version, add the [final] option, e.g.:
%     \usepackage[final]{neurips_2020}

% to avoid loading the natbib package, add option nonatbib:
     \usepackage[nonatbib,preprint]{neurips_2020}

\usepackage[utf8]{inputenc} % allow utf-8 input
\usepackage[T1]{fontenc}    % use 8-bit T1 fonts
\usepackage{hyperref}       % hyperlinks
\usepackage{url}            % simple URL typesetting
\usepackage{booktabs}       % professional-quality tables
\usepackage{amsfonts}       % blackboard math symbols
\usepackage{nicefrac}       % compact symbols for 1/2, etc.
\usepackage{microtype}      % microtypography
\usepackage[colorinlistoftodos, textwidth=20mm]{todonotes}

%%%USEFUL PACKAGES%%%
\usepackage{nicefrac}       % compact symbols for 1/2, etc.
\usepackage{amsmath,amssymb}
\usepackage{mathtools}
\usepackage{pifont}
\usepackage[makeroom]{cancel}
\usepackage{placeins}
\usepackage{algorithm}
\usepackage{algorithmic}
\usepackage{wrapfig}
\usepackage{subfigure}
\usepackage{xcolor}
\definecolor{blue1}{HTML}{00A6FB}
\definecolor{green1}{HTML}{97DB4F}
\definecolor{bronze}{HTML}{CD7F32}
\definecolor{neonFuchsia}{HTML}{FE59C2}

\definecolor{mikadoYellow}{HTML}{FFC40C}
\definecolor{lapisLazuli}{HTML}{26619C}
\definecolor{harvardCrimson}{HTML}{C90016}
\definecolor{harlequin}{HTML}{3FFF00}

%TIKZ
\usepackage{tikz}
\usetikzlibrary{shapes.geometric}
\usetikzlibrary{shapes.arrows}
\usetikzlibrary{positioning}
\usetikzlibrary{patterns}
\tikzset{%
	dblock/.style = {rectangle, 
		text centered, dashed, draw,
		semithick, rounded corners,
		font=\sffamily\footnotesize,
	},
	block/.style    = { rectangle, draw, 
		font=\sffamily\footnotesize,
		text width=8em, text centered,
		rounded corners, minimum height=2em 
	},
	tblock/.style = {rectangle,
		font=\sffamily\footnotesize,
	},
	line/.style= {draw},
	dline/.style = {dashed},
	arrow/.style     = { draw, -> },
	darrow/.style = {dashed, ->},
}

% Plot package
\usepackage{pgfplots}
\usepgfplotslibrary{fillbetween}
\pgfplotsset{compat=newest, 
	tick label style={font=\scriptsize},
	label style={font=\scriptsize},
	legend style={font=\scriptsize}}

\newenvironment{customlegend}[1][]{%
	\begingroup
	% inits/clears the lists (which might be populated from previous
	% axes):
	\csname pgfplots@init@cleared@structures\endcsname
	\pgfplotsset{#1}%
}{%
	% draws the legend:
	\csname pgfplots@createlegend\endcsname
	\endgroup
}%

% makes \addlegendimage available (typically only available within an
% axis environment):
\def\addlegendimage{\csname pgfplots@addlegendimage\endcsname}

\pgfplotscreateplotcyclelist{custom}{%
	green1!60!black,dashed,ultra thick\\
	orange,dotted,ultra thick\\
	blue1, ultra thick\\
	violet!90!white,dash dot,ultra thick\\
	mikadoYellow,densely dashed, ultra thick\\
	lapisLazuli,densely dotted, ultra thick\\
	harvardCrimson,densely dash dot, ultra thick\\
	harlequin, ultra thick\\
	neonFuchsia,densely dotted, ultra thick\\
	bronze,densely dash dot, ultra thick\\
	black, ultra thick\\
}

%%Theorems 
\usepackage{amsthm}
\theoremstyle{remark}

\theoremstyle{definition}
\newtheorem{definition}{Definition}[section]

%Restatable
\usepackage{thmtools, thm-restate}

%USEFUL SHORTCUTS
\usepackage{xspace}
\DeclareRobustCommand{\eg}{e.g.,\@\xspace}
\DeclareRobustCommand{\ie}{i.e.,\@\xspace}
\DeclareRobustCommand{\wrt}{w.r.t.\@\xspace}
\DeclareRobustCommand{\iid}{i.i.d.\@\xspace}

%useful symbols
\DeclareSymbolFont{matha}{OML}{txmi}{m}{it}% txfonts
\DeclareMathSymbol{\varv}{\mathord}{matha}{118}

\title{Time-Variant Variational Transfer for Value Functions}

% The \author macro works with any number of authors. There are two commands
% used to separate the names and addresses of multiple authors: \And and \AND.
%
% Using \And between authors leaves it to LaTeX to determine where to break the
% lines. Using \AND forces a line break at that point. So, if LaTeX puts 3 of 4
% authors names on the first line, and the last on the second line, try using
% \AND instead of \And before the third author name.

\author{%
  Giuseppe Canonaco\thanks{equal contribution} \\
  Politecnico di Milano, Milan, Italy \\
  \texttt{giuseppe.canonaco@polimi.it} \\
  \And
   Andrea Soprani\footnotemark[1] \\
   Politecnico di Milano, Milan, Italy \\
  \texttt{andrea.soprani96@gmail.com} \\
  \AND
   Manuel Roveri \\
   Politecnico di Milano, Milan, Italy \\
  \texttt{manuel.roveri@polimi.it} \\
  \And
   Marcello Restelli \\
   Politecnico di Milano, Milan, Italy \\
  \texttt{marcello.restelli@polimi.it} \\
}

\begin{document}

\maketitle

\begin{abstract}
In most of the transfer learning approaches to reinforcement learning (RL) the distribution over the tasks is assumed to be stationary. Therefore, the target and source tasks are \iid samples of the same distribution. In the context of this work, we consider the problem of transferring value functions through a variational method when the distribution that generates the tasks is time-variant, proposing a solution that leverages this temporal structure inherent in the task generating process. Furthermore, by means of a finite-sample analysis, the previously mentioned solution is theoretically compared to its time-invariant version. Finally, we will provide an experimental evaluation of the proposed technique with three distinct temporal dynamics in three different RL environments.
\end{abstract}

\section{Introduction}\label{sec:intro}
Reinforcement learning (RL) techniques~\cite{sutton2011reinforcement} are becoming increasingly effective in dealing with complex problems~\cite{vinyals2019grandmaster,silver2018general,openai2019dota} at the cost of requiring a huge amount of experience to achieve these impressive results. Therefore, a desirable feature for RL algorithms is sample efficiency, which could be achieved, among all other alternatives, through transfer learning (TL)~\cite{JMLR:v10:taylor09a,lazaric2012transfer}. TL allows an RL algorithm to reuse knowledge coming from a set of already solved tasks in order to speed up the learning phase of new ones. Depending on what kind of knowledge representation is being transferred, we have different TL algorithms in the related literature. Therefore, in order to perform the transfer, we may have algorithms leveraging policies or options~\cite{fernandez2006probabilistic,konidaris2007building}, samples~\cite{taylor2008transferring, lazaric2008transfer, tirinzoni2018importance, tirinzoni2019transfer}, features~\cite{barreto2017successor, lehnert2018transfer}, value-functions~\cite{taylor2007transfer, tirinzoni2018transfer} or parameters~\cite{killian2017robust, al2017continuous, nagabandi2018learning, du2019task}.

In the classical TL setting, the source and target tasks usually come from the same distribution, hence it would be sensible to use the Bayesian framework to iteratively refine the prior knowledge coming from the source tasks as more evidence from the target is collected. Following this rationale, in~\cite{wilson2007multi}, under the assumption that the tasks share similarities in their Markov Decision Process (MDP)~\cite{puterman2014markov} representation, a hierarchical Bayesian solution is proposed, whose main drawback lies in the need to solve an auxiliary MDP in order to perform actions on the task currently faced. Another methodology, along this line of research, has been developed in~\cite{lazaric2010bayesian}, which still leverages hierarchical Bayesian models, but this time assuming the tasks share commonalities through their value functions. Furthermore, in~\cite{doshi2016hidden}, a Bayesian framework able to adapt optimal policies to variations of the task dynamics is developed. They use a latent variable, which, together with the state-action couple, entirely describes the system dynamics. The uncertainty over the latent variable is modeled independently of the uncertainty over the state. This limitation is overcome in the extension to their framework proposed in~\cite{killian2017robust}. In~\cite{perez2020generalized} another extension to~\cite{doshi2016hidden} is proposed, which accounts for multiple variation factors that potentially also come from the reward function. A more general and efficient approach is instead developed in~\cite{tirinzoni2018transfer}, which iteratively refines the distribution over optimal value functions by means of a variational procedure as more experience from the target task is collected. 

In real-world applications, the system to be controlled by an agent is very likely to evolve with time. Therefore, in the task generating process of a family of similar tasks, there may be an underlying temporal dynamic to consider. Temporal dynamics are usually not considered in the related TL literature. For this reason, in this paper, we will extend the work developed in~\cite{tirinzoni2018transfer} in order to take into account a time-variant distribution inherent to the task generating process. We will, then, provide a theoretical comparison between our solution and the time-invariant approach of~\cite{tirinzoni2018transfer}. Finally, we will provide an experimental comparison of the two approaches in three different RL environments with three distinct temporal dynamics.

\section{Preliminaries}
\label{sec:prelim}
In this section, we describe the setting introduced in~\cite{tirinzoni2018transfer} by adding a time-variant distribution over the tasks. We will start with basic RL concepts and some notation in Section~\ref{subsec:background}, and we will conclude with the variational approach to transfer in Section~\ref{subsec:variational_transfer}.
\subsection{Reinforcement Learning Background}\label{subsec:background}
Let us consider a time-variant distribution $\mathcal{D}_t$ over tasks. We model each task $\mathcal{M}_{t}$ coming from $\mathcal{D}_t$ as a discounted Markov Decision Process (MDP)~\cite{puterman2014markov}, which is defined as a tuple $\mathcal{M}_{t}=\{\mathcal{S}, \mathcal{A}, \mathcal{P}_{t}, \mathcal{R}_{t}, p_0, \gamma\}$. $\mathcal{S}$ and $\mathcal{A}$ represent the state space and the action space, respectively. $\mathcal{P}_{t}$ is the Markovian transition function, where $\mathcal{P}_{t}(s'|s,a)$ is the transition density from state $s$ to state $s'$ given that the action $a$ is executed on the environment. $\mathcal{R}_{t}:\mathcal{S}\times \mathcal{A}\rightarrow \mathbb{R}$ is the reward function, assumed to be uniformly bounded by a constant $R_{max}>0$. $p_0$ and $\gamma \in [0,1)$ are the initial state distribution and the discount factor, respectively. Therefore, for each task $t$ our goal is to find a deterministic policy, $\pi_{t}:\mathcal{S} \rightarrow \mathcal{A}$, maximizing the long-term return over a possibly infinite horizon. In other words, this means being able to get $\pi_{t}^* \in \arg \max_{\pi}J_t(\pi)$, where $J_t(\pi)=\mathbb{E}_{\mathcal{M}_{t}, \pi}[\sum_{h=0}^{\infty}\gamma^h \mathcal{R}_{t}(s_h,a_h)].$ The optimal policy $\pi_{t}^*$ is a greedy policy \wrt the optimal value function, \ie $\pi_{t}^*(s)=\arg\max_a \mathcal{Q}_{t}^*(s,a)$ for all $s$, where $\mathcal{Q}_{t}^*(s,a)$ is defined as the expected return obtained by taking action $a$ in state $s$ and then following the optimal policy afterward. From now on, for the sake of readability, we will drop the $t$ subscript whenever this does not imply ambiguity. 

In this context, we focus on a set of parametrized value functions, $\mathcal{Q} = \{\mathcal{Q}_{\theta}: \mathcal{S}\times\mathcal{A}\rightarrow\mathbb{R} | \theta \in \mathbb{R}^p\}$, also called $\mathcal{Q}$-functions. We assume that each $\mathcal{Q}_{\theta} \in \mathcal{Q}$ is uniformly bounded by $\frac{R_{max}}{1-\gamma}$. An optimal $\mathcal{Q}$-function is also the fixed point of the optimal Bellman operator \cite{puterman2014markov}, which is defined as follows: $T\mathcal{Q}_{\theta}(s, a) = \mathcal{R}(s, a) + \gamma\mathbb{E}_{s'\sim\mathcal{P}}[\max_{a'}\mathcal{Q}_{\theta}(s', a')]$. Therefore, a measure of optimality for a value function during learning is its Bellman error, defined as $B_{\theta}=T\mathcal{Q}_{\theta}-\mathcal{Q}_{\theta}$. Of course, if $B_{\theta}(s,a)=0~ \forall (s,a)~ \in~ \mathcal{S}\times \mathcal{A}$, then $Q_{\theta}$ is optimal, which implies that minimizing the squared Bellman error, $||B_{\theta}||^2_{\nu}$, is a good objective for learning ($\nu$ is the distribution over $\mathcal{S}\times \mathcal{A}$, assumed to exist). In practice, the Bellman error is not used, since it requires two independent samples of the next state $s'$ for each couple $(s,a)$~\cite{maillard2010finite,sutton2011reinforcement}. For this reason, usually, the Bellman error is replaced by the Temporal Difference (TD) error $b(\theta)$, which corresponds to an approximation of the former using one sample $\left<s_h,a_h,r_h,s_{h+1}\right>$, so $b_h(\theta) = r_h +\gamma\max_{a'}\mathcal{Q}_{\theta}(s_{h+1}, a')-\mathcal{Q}_{\theta}(s_h, a_h)$. Therefore, given a set $D=\left<s_h,a_h,r_h,s_{h+1}\right>_{h=1}^N$ the squared TD error is $||B_{\theta}||^2_D = \frac{1}{N}\sum_{h=1}^{N}b_h(\theta)^2$.
\begin{wrapfigure}{L}{0.6\textwidth}
	\begin{minipage}{0.6\textwidth}
		\vspace{-23pt}
		\begin{algorithm}[H]
			\caption{Variational Transfer}
			\label{alg:variational_transfer}
			\begin{algorithmic}[1]
				\STATE {\bfseries Input:} Target task $\mathcal{M}_{t}$, source weights $\Theta_s$
				\STATE Estimate prior $p(\theta)$ from $\Theta_s$
				\STATE Initialize parameters: $\xi \leftarrow \arg\min_{\xi \in \Xi}D_{KL}(q_{\xi}||p)$
				\STATE Initialize dataset $D=\emptyset$
				\WHILE{$True$}
				\STATE Sample initial state $s_0 \sim p_0$
				\WHILE{$s_h$ is not terminal}
				\STATE Sample weights $\theta\sim q_{\xi}(\theta)$
				\STATE Take action $a_h=\arg\max_{a}\mathcal{Q}_{\theta}(s_h,a)$
				\STATE $s_{h+1}\sim \mathcal{P}_{t}(\cdot|s_h,a_h)$, $r_{h+1}=\mathcal{R}_{i}(s_h,a_h)$
				\STATE $D\leftarrow D \cup \left< s_h, a_h, r_{h+1}, s_{h+1}\right>$
				\STATE Estimate $\nabla_{\xi}\mathcal{L}(\xi)$ using $D'\subseteq D$
				\STATE Update $\xi$ with $\nabla_{\xi}\mathcal{L}(\xi)$ using any optimizer
				\ENDWHILE
				\ENDWHILE
			\end{algorithmic}
		\end{algorithm}
	\end{minipage}
\end{wrapfigure}

\subsection{Variational Transfer of Value Functions}\label{subsec:variational_transfer}
In the context described above, an optimal solution to an RL problem is a greedy policy \wrt an optimal value function that is parameterized by a vector of weights $\theta$. Therefore, we can safely consider a distribution over optimal weights $p(\theta)$ instead of the distribution $\mathcal{D}$ over tasks since the latter induces a distribution over optimal $\mathcal{Q}$-functions~\cite{tirinzoni2018transfer}.
Now, given a prior on the weights $p(\theta)$ and a dataset $D=\left<s_h,a_h,r_h,s_{h+1}\right>_{h=1}^N$, the optimal Gibbs posterior that minimizes an oracle upper bound on the expected loss is known to be~\cite{catoni2007pac}:
\begin{align}
	q(\theta) = \frac{e^{-\Psi ||B_{\theta}||_D^2}p(\theta)}{\int e^{-\Psi ||B_{\theta'}||_D^2}p(\theta')d\theta'}, \label{eq:gibbs_posterior}
\end{align}
where $\Psi>0$, which will be set to $\psi^{-1}N$, for some constant $\psi>0$ as in~\cite{tirinzoni2018transfer}. It is worth noting that $q$ becomes a Bayesian posterior every time $e^{-\Psi ||B_{\theta}||_D^2}$ can be interpreted as the likelihood of $D$. Since the integral at the denominator of Equation~\eqref{eq:gibbs_posterior} is intractable, a variational approximation through a parametrized family of posteriors $q_{\xi}$, such that $\xi \in \Xi$, is proposed. In this way, it is sufficient to find $\xi^*$ such that $q_{\xi^*}$ minimizes the Kullback-Leibler (KL) divergence \wrt the Gibbs posterior $q$, which is equivalent to minimizing the (negative) evidence lower bound (ELBO)~\cite{blei2017variational}:
\begin{align}\label{eq:ELBO}
	\min_{\xi \in \Xi} \mathcal{L}(\xi) = \min_{\xi \in \Xi}\left\{ \mathbb{E}_{\theta \sim q_{\xi}}\left[||B_\theta||_D^2\right] + \frac{\psi}{N}D_{KL}(q_{\xi}(\theta)||p(\theta))\right\}.
\end{align}

Therefore, the idea behind the variational transfer of value functions (Algorithm~\ref{alg:variational_transfer}) is to alternate a sampling from the posterior on the optimal value function with the optimization of the posterior via $\nabla_{\xi}\mathcal{L}(\xi)$, assuming to have already solved a finite number of source tasks $\mathcal{M}_{1} \ldots \mathcal{M}_{n}$, which, in turn, implies having the set of their approximate solutions $\Theta_s=\{\theta_1, \ldots, \theta_n\}$. The weight resampling can be interpreted as a guess on the task that we need to solve based on the current belief. After sampling, the algorithm acts on the RL problem as if such guess were correct and then will adjust the belief based on the new experience through the optimization of the variational parameters $\xi$. Notice that, as long as $\nabla_{\xi}\mathcal{L}(\xi)$ can be efficiently computed, any approximator for the $\mathcal{Q}$-functions and any prior/posterior distributions can be used. To this end, since the $\max$ operator in the temporal difference error of Equation~\eqref{eq:ELBO} is not differentiable, the \textit{mellowmax} is used instead, which is differentiable and was proven to converge to the same fixed point of the optimal Bellman operator in~\cite{tirinzoni2018transfer}. From now on, we will denote the mellow Bellman error with $\tilde{B}_\theta$.

%Let $\{\theta_{ij}\}_{j=1}^{M_i}$ be a sequence of random variables observed at time $t_i=\frac{i}{n}$, $1\le i \le n$, with $\theta_{ij} \in \mathbb{R}^p$ and $\{\theta_{ij}\}_{j=1}^{M_i} \sim P(\cdot, t_i)$, which means that we are modeling the evolution of time over a discrete grid of asymptotically dense time instants. Moreover, $M_i$ is a discrete random variable for each $i$. Therefore, in our context, at time $t_i$ we will have already tackled $M_i$ times the task coming from the distribution $P(\cdot, t_i)$.

\section{Time-Variant Kernel Density Estimation for Variational Transfer}
\label{sec:TVKDE-VT}
%In this section, we describe our approach to deal with a time-variant distribution at the task level. We mainly exploit the work on time-variant density and mode estimation of \cite{hall2006real}. Therefore, we begin this section by reporting their assumptions.
In the context of this work, we will model the evolution of time over a discrete grid of asymptotically dense time instants. Let $\{\theta_{ij}\}_{j=1}^{M_i}$ be a set of independent solutions to the $i^{th}$ task, observed at time $t_i=\frac{i}{n}$, $1\le i \le n$, with $\theta_{ij} \in \mathbb{R}^p$ and $\theta_{ij} \sim P(\cdot,t_i)$. Notice that, at time $t_i$, we allow to tackle $M_i$ times the task coming from the distribution $P(\cdot,t_i)$, for the sake of generality. Furthemore, let $M_i$ be a discrete random variable for each $i$. Finally, let us introduce a Time-Variant Kernel Density Estimator of this form:
\begin{equation}\label{eq:TVKDE}
	\hat{p}(\theta, t)=
	\frac{1}{a_0(-\rho) \bar{N} \lambda |H|^{\frac{1}{2}}}\sum_{i= 1}^{n}K_T\left(\frac{t-t_i}{\lambda}\right)
	\sum_{j=1}^{M_i}K_S(H^{-\frac{1}{2}}(\theta-\theta_{ij})),
\end{equation}
which is based on~\cite{hall2006real} and will be used as a prior in order to model a time-variant distribution on the solved tasks. The factor $a_0(-\rho)=\int_{-\rho}^{1}K_T(t)dt$ is used to perform the boundary correction, recovering consistency at the boundaries \cite{jones1993simple}, therefore also in $t=1$. $K_T$ is the temporal kernel, whereas $K_S$ is the multivariate non-negative spatial kernel. Furthermore, $H$ is the spatial kernel bandwidth matrix, $\lambda~\in [0,1]$ is the temporal kernel bandwidth, and $\bar{N}=\sum_{i=1}^{n}M_i$.

Now under the following assumptions (also stated in \cite{hall2006real}):
\begin{restatable}[Task independence]{assumption}{taskindependence}\label{ass:task_independence}
	For $1 \le i \ne i' \le n$,$1 \le j \le M_i$, and $1 \le j' \le M_{i'}$, $\theta_{ij}$ and $\theta_{i'j'}$ are independent;	
\end{restatable}

\begin{restatable}[Differentiable density function]{assumption}{differentiabledesnity}\label{ass:differentiable_density}
	$p(\theta, t): \mathbb{R}^p \times (0,1] \rightarrow \mathbb{R}$ is twice differentiable for every $t$, $\theta$;	
\end{restatable}

\begin{restatable}[Bounded derivatives]{assumption}{boundedderivatives}\label{ass:lipschitz_continuity}
	$p(\theta, t): \mathbb{R}^p \times (0,1] \rightarrow \mathbb{R}$ has two bounded derivatives;	
\end{restatable}

\begin{restatable}[On the spatial kernel]{assumption}{spatialkernel}\label{ass:spatial_kernel} Let $\alpha=(\alpha_1, \ldots, \alpha_p)$ be a multi-index, with $\alpha_i \geq 0$ for $i=1, \ldots, p$, $\theta^{\alpha}= \prod_{i=1}^{p}\theta_i^{\alpha_i}$ for each $\theta \in \mathbb{R}^p$, and $N_0$ is an index set where all $p$ components of each member are either $0$ or even integers. 
	\begin{align*}
	\int_{\mathbb{R}^p}K_S(\theta)d\theta=1, &\lim_{||\theta||\rightarrow \infty}||\theta||^pK_S(\theta)=0, \int_{\mathbb{R}^p}\theta^{\alpha}K_S(\theta)d\theta=\mu_{\alpha}\le \infty, \alpha \in N_0,	\\
	&\int_{\mathbb{R}^p}\theta^{\alpha}K_S(\theta)d\theta=0, \alpha \notin N_0;
	\end{align*}
\end{restatable}

\begin{restatable}[On the temporal kernel]{assumption}{temporalkernel}\label{ass:temporal_kernel}
	\[
	\int_{-c}^{c}K_T(t)dt=1, \int_{-c}^{c}tK_T(t)dt=0, \int_{-c}^{c}t^2K_T(t)dt=\sigma_T \le \infty;
	\]	
\end{restatable}

we can write

\begin{restatable}[Uniform consistency of the density estimator]{theorem}{convergence}\label{thm:convergence}
	Assume \ref{ass:task_independence}~-~\ref{ass:temporal_kernel}. Moreover, assume that $K_S$ is spherically symmetric, with a bounded, H$\ddot{o}$lder-continuous derivative, that $K_T$ is a compactly supported kernel on  a subset of $\mathbb{R}$, that all the $M_i$s are independent and identically distributed random variables with mean $m>0$ and all moments finite, independent of the $\theta_{ij}$s. Take $H$ and $\lambda$ such that $|H|^{\frac{1}{2}}(n) \rightarrow 0$, $\lambda(n) \rightarrow 0$ and $n^{1-\epsilon}|H|^{\frac{1}{2}}\lambda \rightarrow \infty$ for some $\epsilon>0$ as $n\rightarrow \infty$, then
	\begin{align*}
	\hat{p}(\theta,t)= p(\theta, t) + O\big[(\bar{N}|H|^{\frac{1}{2}}\lambda)^{-\frac{1}{2}}(\log n)^{\frac{1}{2}} + tr(H) +\lambda \big]
	\end{align*}
	uniformly in $(\theta,t) \in \mathcal{K}\times\mathcal{I}$, with probability 1, where $\mathcal{K}$ is a compact subset of $\mathbb{R}^p$ and $\mathcal{I}$ is a compact subset of $(0,1]$.
\end{restatable}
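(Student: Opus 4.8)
The plan is to adapt the classical template for uniform consistency of kernel-type estimators (as developed in \cite{hall2006real}) to the present multivariate-spatial, random-$M_i$ setting. Let $\mathcal{F}_M$ be the $\sigma$-algebra generated by $\{M_i\}_{i=1}^n$ and split $\hat p(\theta,t) - p(\theta,t) = (\hat p(\theta,t) - \mathbb{E}[\hat p(\theta,t)\mid\mathcal{F}_M]) + (\mathbb{E}[\hat p(\theta,t)\mid\mathcal{F}_M] - p(\theta,t))$ into a stochastic fluctuation term and a deterministic bias term; conditioning on $\mathcal{F}_M$ isolates the extra randomness carried by the normaliser $N(t)$, which — being built from the random $M_i$ observed up to time $t$ — is itself random. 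Since the $M_i$ are \iid with mean $m$, all moments finite, and independent of the $\theta_{ij}$, the strong law together with a moment/Borel--Cantelli estimate shows that $N(t)$ concentrates, uniformly in $t\in\mathcal{I}$ and almost surely, about a deterministic curve of order $n$, at a rate negligible compared with the claimed error; one may therefore proceed as if $N(t)$ were deterministic.

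For the bias term, condition on $\mathcal{F}_M$ and use independence of the $\theta_{ij}$ within a time slice. Writing $\mathbb{E}[K_S(H^{-1/2}(\theta-\theta_{ij}))] = |H|^{1/2}\int K_S(u)\,p(\theta - H^{1/2}u,t_i)\,du$ and Taylor-expanding $p$ in its spatial argument: the first-order term vanishes because the odd spatial moments of $K_S$ are zero (Assumption \ref{ass:spatial_kernel}), and the second-order term collapses, by spherical symmetry and finiteness of the $\mu_\alpha$, to a multiple of $tr(\nabla^2_\theta p\, H)$, hence is $O(tr(H))$ uniformly on $\mathcal{K}$ since $p$ has two bounded derivatives. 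Summing these spatial contributions against $\lambda^{-1}K_T((t-t_i)/\lambda)$ over the $1/n$-grid is a Riemann approximation of $\int K_T(u)\,p(\theta,t-\lambda u)\,du$ — the boundary-correction factor $a_0(-\rho)$ ensuring that this weight integrates to $1$ up to and including $t=1$ (Assumption \ref{ass:temporal_kernel}); the Riemann discretisation error is of lower order under the bandwidth conditions, and the residual $\int K_T(u)\,[p(\theta,t-\lambda u) - p(\theta,t)]\,du$ is $O(\lambda)$ by Lipschitz continuity of $p$ in time (Assumption \ref{ass:lipschitz_continuity}). Hence the bias is $O(tr(H) + \lambda)$ uniformly on $\mathcal{K}\times\mathcal{I}$.

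The stochastic term is the crux and demands a genuinely uniform argument. I would: (i) cover $\mathcal{K}\times\mathcal{I}$ by a grid of mesh $\delta_n$ polynomially small in $n$; (ii) at each grid centre express the centred estimator as a normalised sum over $i$ of the blocks $Z_i = \lambda^{-1}K_T((t-t_i)/\lambda)\sum_{j\le M_i}K_S(H^{-1/2}(\theta-\theta_{ij}))$, which are independent across $i$ by Assumption \ref{ass:task_independence} — only $O(n\lambda)$ of them are active, $K_T$ being bounded and compactly supported — and, after truncating the $M_i$ at a slowly growing level $L_n$ (legitimate since all moments of $M_i$ are finite, so $\mathbb{P}(\max_{i\le n}M_i>L_n)$ is summable for $L_n$ a small power of $n$), these blocks are bounded with conditional variance of the right order, so a Bernstein inequality bounds the fluctuation at a fixed point by a constant times $(N(t)|H|\lambda)^{-1/2}(\log n)^{1/2}$ with probability at least $1-n^{-c}$, $c$ as large as desired; (iii) union-bound over the $\mathrm{poly}(n)$ grid points; (iv) control the oscillation of $\hat p - \mathbb{E}[\hat p\mid\mathcal{F}_M]$ between grid points via the H\"older continuity of the gradient of $K_S$ and of $K_T$, making the inter-grid error negligible once $\delta_n$ is small enough. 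The hypothesis $n^{1-\epsilon}|H|^{1/2}\lambda\to\infty$ forces the effective local sample size to grow polynomially in $n$, so the per-point tails, and hence the union-bounded exceedance probability, are summable in $n$; Borel--Cantelli then upgrades the bound to the stated \emph{with probability $1$} conclusion. Combining the bias estimate, the fluctuation estimate, and the control of $N(t)$ gives the announced uniform rate.

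The main obstacle is the interplay in steps (iii)--(iv): one must tune the grid resolution $\delta_n$ and the truncation level $L_n$ jointly with the bandwidths so that the H\"older moduli of the kernels keep the inter-grid oscillation below the target rate, the union bound costs only the $\sqrt{\log n}$ inflation, and the tail probabilities remain summable, all while the random denominator $N(t)$ is handled uniformly over $t\in\mathcal{I}$. This is precisely where the three bandwidth conditions ($|H|^{1/2}\to 0$, $\lambda\to 0$, $n^{1-\epsilon}|H|^{1/2}\lambda\to\infty$) and the finiteness of all moments of $M_i$ enter together; the remaining steps are bookkeeping parallel to \cite{hall2006real}.
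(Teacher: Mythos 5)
Your proposal is correct and follows essentially the same route as the paper: the paper likewise conditions on the $M_i$'s, proves a bias lemma of order $O(\lambda)+O(tr(H))$ via spatial and temporal Taylor expansions with the kernel moment conditions and the boundary correction $a_0(-\rho)$, bounds the conditional variance and the summand magnitude to apply Bernstein's inequality, and then passes to a polynomial-size grid, union-bounds, and uses the H\"older continuity of the kernels to control the oscillation between grid points before invoking a Borel--Cantelli argument. Your additional care with truncating the $M_i$ and concentrating $N(t)$ are refinements the paper leaves implicit (it simply keeps the random $N(t)$ inside the stated rate), not a different method.
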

A proof of the above theorem is shown in Appendix~\ref{appx:thm3.6} and leverages the same approach as in~\cite{hall2006real} being a weaker version, in terms of convergence rate, of their Theorem 1. This weakening was necessary to obtain an upper bound in closed-form expression of the KL-divergence between the prior and the posterior in Equation~\eqref{eq:ELBO}. Indeed, if we choose $q_{\xi}(\theta)=\frac{1}{K}\sum_{k=1}^{K}\mathcal{N}(\theta|\mu_{k}, \Sigma_k)$, with variational parameter $\xi=(\mu_{1},\ldots,\mu_{K},\Sigma_{1},\ldots,\Sigma_K)$, and we choose $K_S$ as a Gaussian kernel, then for a fixed time instant $t$ our prior is a mixture of Gaussians with non-uniform weights. Therefore, through the upper bound on the KL-divergence shown in Appendix~\ref{appx:KL-UB}, we have that the ELBO upper bounds the KL-divergence between the approximate and the exact posterior. Since the covariance matrices of the posterior must be positive definite, we will learn the factor $L$ of their Cholesky decomposition as in~\cite{tirinzoni2018transfer}.

Before going on with the finite sample analysis of the following section, we would like to discuss the previous assumptions and their potential limiting effects on applications. For what concerns Assumptions \ref{ass:spatial_kernel} and \ref{ass:temporal_kernel} they do not pose any limit, since, as we know from kernel density estimation theory, the kernel type is not so relevant for a good estimate of the density. Assumptions \ref{ass:differentiable_density} and \ref{ass:lipschitz_continuity}, instead, are necessary in order to have some degree of regularity allowing learning of the time-variant distribution (without those assumption the kernel density estimator would not be consistent). The range of time-variant distributions where our approach will be theoretically effective is reduced due to Assumptions \ref{ass:differentiable_density} and \ref{ass:lipschitz_continuity}, but remains still relevant from an application point of view.

\section{Finite Sample Analysis}\label{sec:finitesampleanalysis}
In order to provide a finite sample analysis of Algorithm~\ref{alg:variational_transfer} based on the prior of Section~\ref{sec:TVKDE-VT}, we will extend Theorem 2 of~\cite{tirinzoni2018transfer} in our context, enabling also a theoretical comparison between the two respective versions of Algorithm~\ref{alg:variational_transfer}. Therefore, considering the family of linearly parametrized value functions, $Q_{\theta}(s,a)=\theta^T\phi(s,a)$, having bounded weights $||\theta||_2\le \theta_{max}$ and uniformly bounded features $||\phi(s,a)||_2\le \phi_{max}$, and assuming that only finite data are available, we can bound the expected mellow Bellman error under the variational distribution minimizing Equation~\eqref{eq:ELBO} for any fixed target task $\mathcal{M}_t$ through the following theorem.
\begin{restatable}[Bound on the expected mellow Bellman error]{theorem}{convergence}\label{thm:mellowBellmanError}
	Let $\hat{\xi}$ be the variational parameter minimizing Equation~\eqref{eq:ELBO} on a dataset $D$ of N i.i.d. samples distributed according to $\mathcal{M}_t$ and $\nu$. Moreover, let $\theta^* = \arg\inf_{\theta}||\tilde{B}_\theta||_{\nu}^2$ and define $v(\theta^*)=\mathbb{E}_{\mathcal{N}(\theta^*,\frac{1}{N}I)}[\varv(\theta)]$, with $\varv(\theta)=\mathbb{E}_{\nu}[\mathbb{V}ar_{\mathcal{P}_t}[\tilde{b}(\theta)]]$, where $\tilde{b}(\theta) = r + \gamma mellow\textit{-}max_{a'}Q_{\theta}(s',a')-Q_{\theta}(s,a)$. Then, there exist constants $c_1, c_2, c_3$ such that with probability at least $1-\delta$ over the choice of $D$:
	\begin{align*}
		\mathbb{E}_{q_{\hat{\xi}}}\left[\left|\left|\tilde{B}_\theta\right|\right|^2_{\nu}\right] \le 2 \left|\left|\tilde{B}_{\theta^*}\right|\right|_\nu^2 + v(\theta^*) + c_1\sqrt{\frac{\log\frac{2}{\delta}}{N}} + \frac{c_2 + \psi p \log N +\psi \varphi(\Theta_s)}{N} + \frac{c_3}{N^2},
	\end{align*}
	where
	\begin{align}
		\varphi(\Theta_s) = \frac{1}{\sigma^2}\sum_{j:\theta_j \in \Theta_s}\frac{c_{j}^{\hat{p}} e^{-\beta||\theta^*-\theta_j||}}{\sum_{j':\theta_{j'} \in \Theta_s}c_{j'}^{\hat{p}} e^{-\beta||\theta^*-\theta_{j'}||}}||\theta^*-\theta_j||,
	\end{align}
	assuming the matrix $H$ of Equation~\eqref{eq:TVKDE} to be an isotropic covariance matrix with variance $\sigma^2$, $\beta=\frac{1}{2 \sigma^2}$ and $c_j^{\hat{p}}$ the weight assigned to the $j^{th}$ prior component. Furthermore, we are assuming $M_i=1$ for each $i$ in our estimator.
\end{restatable}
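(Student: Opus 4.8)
The plan is to follow the PAC-Bayesian argument behind Theorem 2 of \cite{tirinzoni2018transfer}, adapting only the parts that depend on the prior. The starting point is Catoni's PAC-Bayes bound: with probability at least $1-\delta$ over the draw of $D$, for \emph{every} posterior $q$,
\begin{align*}
	\mathbb{E}_{q}\big[||\tilde{B}_\theta||_\nu^2\big] \le 2\,\mathbb{E}_{q}\big[||\tilde{B}_\theta||_D^2\big] + \frac{2\psi}{N}\Big(D_{KL}(q\|p) + \log\tfrac{2}{\delta}\Big) + (\text{variance term}),
\end{align*}
where the extra term reflects the gap between the population squared mellow Bellman error and the empirical squared TD error (this is where $v(\theta^*)$ and the $c_1\sqrt{\log(2/\delta)/N}$ concentration term enter, exactly as in \cite{tirinzoni2018transfer}, via Hoeffding/Bernstein on the bounded TD losses together with the double-sampling bias term $\varv(\theta)$). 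Since $\hat\xi$ minimizes the ELBO $\mathcal{L}(\xi)$, which is itself an upper bound on $\mathbb{E}_{q_\xi}[||\tilde B_\theta||_D^2] + \frac{\psi}{N}D_{KL}(q_\xi\|p)$ (using the closed-form KL upper bound of Appendix \ref{appx:KL-UB}), we get $\mathcal{L}(\hat\xi) \le \mathcal{L}(\xi)$ for any competitor $\xi$.

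Next I would choose the competitor posterior to be $q = \mathcal{N}(\theta^*, \frac{1}{N}I)$, a single Gaussian tightly concentrated on the population minimizer $\theta^*$. With this choice the data-fit term $\mathbb{E}_{q}[||\tilde B_\theta||_D^2]$ is controlled by $||\tilde B_{\theta^*}||_\nu^2$ plus lower-order terms: a Taylor/Lipschitz expansion of the (smooth, bounded) mellow TD loss around $\theta^*$ contributes the $\psi p\log N / N$-type term from the trace of the Hessian scaled by the posterior covariance $\frac1N I$, and the $c_3/N^2$ remainder from the higher-order terms. The genuinely new work is the KL term $D_{KL}(q\|p)$. Here $p=\hat p(\cdot,t)$ is, for the fixed time $t$ and with $M_i=1$, a Gaussian mixture $\sum_j c_j^{\hat p}\,\mathcal{N}(\theta\mid\theta_j,\sigma^2 I)$ with non-uniform weights $c_j^{\hat p}$ coming from the temporal kernel $K_T$ evaluated at $(t-t_i)/\lambda$ (normalized). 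I would lower-bound this mixture density by keeping only the relevant exponential factors: $\hat p(\theta)\ge (\text{const})\sum_j c_j^{\hat p} e^{-\|\theta-\theta_j\|^2/(2\sigma^2)}$, and then, after computing $\mathbb{E}_{q}[-\log \hat p(\theta)]$ and $\mathbb{E}_q[\log q(\theta)]$, bound the cross-entropy against the mixture by the softmin-weighted combination appearing in $\varphi(\Theta_s)$. Concretely, $-\log\sum_j c_j^{\hat p}e^{-\beta\|\theta^*-\theta_j\|^2}$ with $\beta=\frac1{2\sigma^2}$ is bounded, via convexity of $-\log$ (Jensen) and the softmax identity, by $\sum_j \big(\text{softmin weight}_j\big)\beta\|\theta^*-\theta_j\|^2$ up to additive constants; the stated $\varphi$ uses $\|\theta^*-\theta_j\|$ rather than its square, which I would obtain either by the boundedness $\|\theta\|\le\theta_{\max}$ (so squared norms are within a constant factor of norms) or by absorbing the constant into $c_1,c_2$. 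The $\frac1N$ entropy term $\frac12\log|\frac1N I|$ from $q$ contributes the $p\log N$ factor, matching $\psi p\log N/N$, and the remaining constants from the normalization of $\hat p$ and from $a_0(-\rho), N(t), \lambda, |H|^{1/2}$ are collected into $c_2$.

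The main obstacle I anticipate is handling the non-uniform, data- and time-dependent mixture weights $c_j^{\hat p}$ cleanly: in \cite{tirinzoni2018transfer} the prior is a uniform mixture, so the KL bound is symmetric in the source components, whereas here the temporal kernel reweights them, and one must verify that the normalization constant of $\hat p$ (which involves $N(t)$, $\lambda$, $|H|^{1/2}$ and $a_0(-\rho)$) does not blow up the bound — this is exactly why the hypotheses fix $H$ isotropic with variance $\sigma^2$ and restrict to $M_i=1$, so that the $j$-th weight reduces to the normalized temporal-kernel mass $c_j^{\hat p}$ and the spatial part is a clean Gaussian. A secondary technical point is that $\theta^*$ is defined as the infimum over $\theta$ of $\|\tilde B_\theta\|_\nu^2$ and need not be attained, so strictly one works with a near-minimizer $\theta^*_\epsilon$ and lets $\epsilon\to0$, noting all the bounds are continuous in the center of the competitor Gaussian. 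Everything else — the concentration inequality, the smoothness expansion of the mellowmax loss, and the algebra combining the terms — is routine and parallels \cite{tirinzoni2018transfer}.
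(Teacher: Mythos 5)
Your plan follows essentially the same route as the paper's proof in Appendix \ref{appx:FS-analysis}: the oracle inequality of \cite{tirinzoni2018transfer} (their Lemma 2), the competitor posterior $\mathcal{N}(\theta^*,\tfrac{1}{N}I)$, and a Jensen/softmin-type variational upper bound on the KL to the non-uniform Gaussian-mixture prior (the paper uses the bound of \cite{hershey2007approximating} via Appendix \ref{appx:KL-UB}), which is exactly where $\varphi(\Theta_s)$ and the $\psi p\log N/N$ term arise. Two small remarks: the $p\log N$ factor comes solely from the $\tfrac{1}{2}p\log\tfrac{\sigma^2}{c}$ piece of the Gaussian KL with $c=\tfrac{1}{N}$ (as you state later), not from the Hessian of the data-fit term, whose expansion only yields the $c_2/N$ and $c_3/N^2$ contributions; and your observation about $||\theta^*-\theta_j||$ versus its square is apt, since the paper simply carries the unsquared norm through its KL formula.
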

The above theorem shows the difference between the plain mixture version of Algorithm~\ref{alg:variational_transfer}~\cite{tirinzoni2018transfer} and our solution. Indeed, looking at $\varphi(\Theta_s)$, we are able to shed some light on the different theoretical properties of the two versions. More specifically, in the plain mixture version, factor $c_j^{\hat{p}}$ does not appear, which implies uniform importance of the source solutions $\Theta_s$ \wrt the target task. On the other hand, in our version of the algorithm, we are able to give different importance to each source solution through $c_j^{\hat{p}}$. In our time-variant scenario, this importance will be greater on more recent solutions than older ones, potentially allowing a reduction of the term $\varphi(\Theta_s)$ in contrast to the time-invariant version. A proof for the above theorem is provided in Appendix~\ref{appx:FS-analysis}.

\section{Related Works}\label{sec:relatedworks}
Our work is built upon~\cite{tirinzoni2018transfer}, but differs from it because we leverage a time-variant structure underlying the task generating process, which is not taken into account in~\cite{tirinzoni2018transfer}. A theoretical comparison between the two solutions is available in Section~\ref{sec:finitesampleanalysis} through Theorem~\ref{thm:mellowBellmanError}, whereas the experimental comparison is in Section~\ref{sec:experiments}. Furthermore, our work relates both to~\cite{wilson2007multi}, which deals with finite MDPs, and to~\cite{lazaric2010bayesian}, which leverages the commonalities in the value function structure, but, in contrast to our work, they do not account for a time-variant distribution. The work done in~\cite{doshi2016hidden,killian2017robust,perez2020generalized} leverage latent embeddings in order to model variations between tasks, which eventually are solved through a model-based RL algorithm, while we propose a model-free approach.

Another related work is~\cite{hall2015online}, in which the authors develop a theoretical low-regret algorithm accounting for potential underlying dynamics. However, they use the online learning framework, whereas we are working in a transfer learning setting. Furthermore, in~\cite{du2019task}, videos are used to learn a prior (mainly to model the physical dynamics) which is incorporated into a model-based RL algorithm. In~\cite{Yang2020Single}, a single-episode policy-transfer methodology was developed leveraging variational inference, but for contexts in which the differences in dynamics can be identified in the early steps of an episode. In the context of supervised learning, our work relates also to~\cite{minku2014make}, which proposes a transfer learning mechanism in the context of a possibly non-stationary environment through a weighting approach, and~\cite{minku2011ddd, du2019multi}, which, instead, do transfer in non-stationary environments through ensembles. Finally, in~\cite{khodak2019adaptive} the authors are able to consider optimal initializations varying through time, but they develop this approach for a meta-learning framework, while our work considers a transfer learning setting.

\section{Experiments}\label{sec:experiments}
In this section, we compare our time-variant solution for transfer learning with the associated non-time-variant solution of~\cite{tirinzoni2018transfer} in three different domains with three different temporal dynamics. A detailed description of the used parameters together with the analytical expression of the employed dynamics are provided in Appendix~\ref{appx:experiments}.

\subsection{Temporal Dynamics}\label{subsec:time-dynamics}
The distribution over the tasks is usually a given distribution over one or more parameters defining the task itself. Therefore, in order to obtain time-variance in such distribution, we will change its mean over time according to a certain dynamic. These dynamics are linear, polynomial, and sinusoidal. In the context of these experiments, we will use a time-variant Gaussian distribution, clipping its realizations within the domain of the parameters defining the task (for further details see Appendix~\ref{appx:experiments}).

\subsection{Two-Rooms Environment}\label{subsec:2-rooms}
In this setting, we have an agent navigating two rooms separated by a wall. The agent starts from the bottom-left corner and must reach the opposite one. The only way to reach this goal is to pass through the door whose position is unknown to the agent. The actions available to the agent are \textit{up}, \textit{down}, \textit{left}, and \textit{right}, which allow the agent to move in the respective directions by one position, unless he/she hits a wall (in this last case the position remains unchanged). Furthermore, the final position of the agent after a movement action is altered by a Gaussian noise $\mathcal{N}(0,0.2)$. The state space is modeled through a $10\times10$ continuous grid. Finally, the reward function is $0$ everywhere except in the goal state, where it is $1$. The discount factor $\gamma=0.99$. For this setting, we used linearly parametrized $\mathcal{Q}$-functions with $121$ evenly-spaced radial basis features.

We considered source tasks taken at ten different time instants to learn the target, corresponding to the eleventh instant of time. We sampled five tasks from the time-variant distribution for each $i=1,\ldots,11$. The parameter that defines the task is the door location, hence the time-variant distribution is over that parameter, as we mentioned above. We solve all the source tasks by directly minimizing the TD error, then we exploit the learned solutions to perform the transfer over the target. We compare our time-variant variational transfer algorithm leveraging a $c$-components posterior ($c$-T2VT) with the mixture of Gaussian variational transfer using still $c$-components ($c$-MGVT)~\cite{tirinzoni2018transfer}. More specifically, our time-variant prior will consider the source task solutions as equally spaced samples in the time interval $[0,1]$, moreover, in order to perform transfer to the eleventh task, we will use the distribution provided by our estimator for $t=1$. Finally,  the temporal kernel will be Epanechnikov in the context of all the experiments.

The average return over the last $50$ learning episodes as a function of the number of training iterations is shown in Figure~\ref{fig:2-rooms}, for the time dynamics mentioned in Section~\ref{subsec:time-dynamics}. Each learning curve is computed using $50$ independent runs, each of which resamples both the source and target tasks, with $95\%$ confidence intervals. For polynomial and linear dynamics, we can see an advantage of our technique in the early learning iterations. The sinusoidal dynamic is designed to disadvantage our technique \wrt $c$-MGVT, indeed, it makes the target task appear twice in the sources. This fact inevitably favors $c$-MGVT, which will give a higher weight to those source tasks being sampled from the same distribution of the target. Observe that $c$-MGVT gives uniform weights to all the source tasks, hence increasing the replicas importance within the sources, whereas $c$-T2VT gives increasing weights the more recent the source solution.


\begin{figure*}[!t]
	\centering
	%
	%\captionsetup{width=\textwidth}
	%
	\begin{tikzpicture}
	\begin{customlegend}[legend columns=8,legend style={align=left,draw=none,column sep=2ex,font=\footnotesize},legend entries={ 1-T2VT, 1-MGVT, 3-T2VT, 3-MGVT}]
	%
	%\addlegendimage{green!40!black, dashdotted, thick}
	%
	%\addlegendimage{empty legend}
	%
	\addlegendimage{green1!60!black,ultra thick,dashed}   
	\addlegendimage{orange!80!white,dotted,ultra thick}
	\addlegendimage{blue1,ultra thick}
	\addlegendimage{violet!90!white,dash dot,ultra thick}
	
	\end{customlegend}
	\end{tikzpicture}
\subfigure[2-rooms polynomial dynamic.]
{
	\begin{tikzpicture}
	\begin{axis}[
	width=0.32\textwidth,
	height=4cm,
	xmin=50,
	xmax=2950,
	xtick={500,1000,...,3000},
	ymin=0,
	ymax=0.9,
	ytick={0,0.2,...,0.8},
	%
	%xmajorgrids,
	%ymajorgrids,
	%grid style={loosely dashed,ultra thin},
	%
	xlabel=Iterations,
	ylabel=Average Return,
	mark options={scale=0.2},
	cycle list name = custom,
	scaled x ticks=base 10:-3
	]
	\addplot
	table [x=i,y=mean-1-T2VT,col sep=comma] 
	{csv/2-rooms/polynomial-lrev.csv};
	\addplot[name path=top,draw=none,forget plot]
	table [name path=top,x=i,y expr=\thisrow{mean-1-T2VT}+\thisrow{std-1-T2VT},col sep=comma] 
	{csv/2-rooms/polynomial-lrev.csv};
	\addplot[name path=bot,draw=none,forget plot]
	table [name path=top,x=i,y expr=\thisrow{mean-1-T2VT}-\thisrow{std-1-T2VT},col sep=comma] 
	{csv/2-rooms/polynomial-lrev.csv};
	\addplot [forget plot, draw=none,opacity=0.4,pattern=north east lines,fill=green1!60!black]
	fill between[of=top and bot];
	\addplot
	table [x=i,y=mean-1-MGVT,col sep=comma] 
	{csv/2-rooms/polynomial-lrev.csv};
	\addplot[name path=top,draw=none,forget plot]
	table [name path=top,x=i,y expr=\thisrow{mean-1-MGVT}+\thisrow{std-1-MGVT},col sep=comma] 
	{csv/2-rooms/polynomial-lrev.csv};
	\addplot[name path=bot,draw=none,forget plot]
	table [name path=top,x=i,y expr=\thisrow{mean-1-MGVT}-\thisrow{std-1-MGVT},col sep=comma] 
	{csv/2-rooms/polynomial-lrev.csv};
	\addplot [forget plot, draw=none,opacity=0.4,pattern=north east lines,fill=orange]
	fill between[of=top and bot];
	\addplot
	table [x=i,y=mean-3-T2VT,col sep=comma] 
	{csv/2-rooms/polynomial-lrev.csv};
	\addplot[name path=top,draw=none,forget plot]
	table [name path=top,x=i,y expr=\thisrow{mean-3-T2VT}+\thisrow{std-3-T2VT},col sep=comma] 
	{csv/2-rooms/polynomial-lrev.csv};
	\addplot[name path=bot,draw=none,forget plot]
	table [name path=top,x=i,y expr=\thisrow{mean-3-T2VT}-\thisrow{std-3-T2VT},col sep=comma] 
	{csv/2-rooms/polynomial-lrev.csv};
	\addplot [forget plot, draw=none,opacity=0.15,pattern=north east lines,fill=blue1]
	fill between[of=top and bot];
	\addplot
	table [x=i,y=mean-3-MGVT,col sep=comma] 
	{csv/2-rooms/polynomial-lrev.csv};
	\addplot[name path=top,draw=none,forget plot]
	table [name path=top,x=i,y expr=\thisrow{mean-3-MGVT}+\thisrow{std-3-MGVT},col sep=comma] 
	{csv/2-rooms/polynomial-lrev.csv};
	\addplot[name path=bot,draw=none,forget plot]
	table [name path=top,x=i,y expr=\thisrow{mean-3-MGVT}-\thisrow{std-3-MGVT},col sep=comma] 
	{csv/2-rooms/polynomial-lrev.csv};
	\addplot [forget plot, draw=none,opacity=0.15,pattern=north east lines,fill=violet]
	fill between[of=top and bot];

	\end{axis}
	\end{tikzpicture}
	\label{fig:2-rooms-polynomial}
}
\quad
\subfigure[2-rooms linear dynamic.]
{
	\begin{tikzpicture}
	\begin{axis}[
	width=0.32\textwidth,
	height=4cm,
	xmin=50,
	xmax=2950,
	xtick={500,1000,...,3000},
	ymin=0,
	ymax=0.9,
	ytick={0,0.2,...,0.8},
	%
	%xmajorgrids,
	%ymajorgrids,
	%grid style={loosely dashed,ultra thin},
	%
	xlabel=Iterations,
	ylabel=Average Return,
	mark options={scale=0.2},
	cycle list name = custom,
	scaled x ticks=base 10:-3
	]
	\addplot
	table [x=i,y=mean-1-T2VT,col sep=comma] 
	{csv/2-rooms/linear-lrev.csv};
	\addplot[name path=top,draw=none,forget plot]
	table [name path=top,x=i,y expr=\thisrow{mean-1-T2VT}+\thisrow{std-1-T2VT},col sep=comma] 
	{csv/2-rooms/linear-lrev.csv};
	\addplot[name path=bot,draw=none,forget plot]
	table [name path=top,x=i,y expr=\thisrow{mean-1-T2VT}-\thisrow{std-1-T2VT},col sep=comma] 
	{csv/2-rooms/linear-lrev.csv};
	\addplot [forget plot, draw=none,opacity=0.4,pattern=north east lines,fill=green1!60!black]
	fill between[of=top and bot];
	\addplot
	table [x=i,y=mean-1-MGVT,col sep=comma] 
	{csv/2-rooms/linear-lrev.csv};
	\addplot[name path=top,draw=none,forget plot]
	table [name path=top,x=i,y expr=\thisrow{mean-1-MGVT}+\thisrow{std-1-MGVT},col sep=comma] 
	{csv/2-rooms/linear-lrev.csv};
	\addplot[name path=bot,draw=none,forget plot]
	table [name path=top,x=i,y expr=\thisrow{mean-1-MGVT}-\thisrow{std-1-MGVT},col sep=comma] 
	{csv/2-rooms/linear-lrev.csv};
	\addplot [forget plot, draw=none,opacity=0.4,pattern=north east lines,fill=orange]
	fill between[of=top and bot];
	\addplot
	table [x=i,y=mean-3-T2VT,col sep=comma] 
	{csv/2-rooms/linear-lrev.csv};
	\addplot[name path=top,draw=none,forget plot]
	table [name path=top,x=i,y expr=\thisrow{mean-3-T2VT}+\thisrow{std-3-T2VT},col sep=comma] 
	{csv/2-rooms/linear-lrev.csv};
	\addplot[name path=bot,draw=none,forget plot]
	table [name path=top,x=i,y expr=\thisrow{mean-3-T2VT}-\thisrow{std-3-T2VT},col sep=comma] 
	{csv/2-rooms/linear-lrev.csv};
	\addplot [forget plot, draw=none,opacity=0.15,pattern=north east lines,fill=blue1]
	fill between[of=top and bot];
	\addplot
	table [x=i,y=mean-3-MGVT,col sep=comma] 
	{csv/2-rooms/linear-lrev.csv};
	\addplot[name path=top,draw=none,forget plot]
	table [name path=top,x=i,y expr=\thisrow{mean-3-MGVT}+\thisrow{std-3-MGVT},col sep=comma] 
	{csv/2-rooms/linear-lrev.csv};
	\addplot[name path=bot,draw=none,forget plot]
	table [name path=top,x=i,y expr=\thisrow{mean-3-MGVT}-\thisrow{std-3-MGVT},col sep=comma] 
	{csv/2-rooms/linear-lrev.csv};
	\addplot [forget plot, draw=none,opacity=0.15,pattern=north east lines,fill=violet]
	fill between[of=top and bot];

	\end{axis}
	\end{tikzpicture}
	\label{fig:2-rooms-linear}
}
\quad
\subfigure[2-rooms $\sin$ dynamic.]
{
	\begin{tikzpicture}
	\begin{axis}[
	width=0.32\textwidth,
	height=4cm,
	xmin=50,
	xmax=2950,
	xtick={500,1000,...,3000},
	ymin=0,
	ymax=0.9,
	ytick={0,0.2,...,0.8},
	%
	%xmajorgrids,
	%ymajorgrids,
	%grid style={loosely dashed,ultra thin},
	%
	xlabel=Iterations,
	ylabel=Average Return,
	mark options={scale=0.2},
	cycle list name = custom,
	scaled x ticks=base 10:-3
	]
	\addplot
	table [x=i,y=mean-1-T2VT,col sep=comma] 
	{csv/2-rooms/sin-rep-lamb-03333-lrev.csv};
	\addplot[name path=top,draw=none,forget plot]
	table [name path=top,x=i,y expr=\thisrow{mean-1-T2VT}+\thisrow{std-1-T2VT},col sep=comma] 
	{csv/2-rooms/sin-rep-lamb-03333-lrev.csv};
	\addplot[name path=bot,draw=none,forget plot]
	table [name path=top,x=i,y expr=\thisrow{mean-1-T2VT}-\thisrow{std-1-T2VT},col sep=comma] 
	{csv/2-rooms/sin-rep-lamb-03333-lrev.csv};
	\addplot [forget plot, draw=none,opacity=0.4,pattern=north east lines,fill=green1!60!black]
	fill between[of=top and bot];
	\addplot
	table [x=i,y=mean-1-MGVT,col sep=comma] 
	{csv/2-rooms/sin-rep-lamb-03333-lrev.csv};
	\addplot[name path=top,draw=none,forget plot]
	table [name path=top,x=i,y expr=\thisrow{mean-1-MGVT}+\thisrow{std-1-MGVT},col sep=comma] 
	{csv/2-rooms/sin-rep-lamb-03333-lrev.csv};
	\addplot[name path=bot,draw=none,forget plot]
	table [name path=top,x=i,y expr=\thisrow{mean-1-MGVT}-\thisrow{std-1-MGVT},col sep=comma] 
	{csv/2-rooms/sin-rep-lamb-03333-lrev.csv};
	\addplot [forget plot, draw=none,opacity=0.4,pattern=north east lines,fill=orange]
	fill between[of=top and bot];
	\addplot
	table [x=i,y=mean-3-T2VT,col sep=comma] 
	{csv/2-rooms/sin-rep-lamb-03333-lrev.csv};
	\addplot[name path=top,draw=none,forget plot]
	table [name path=top,x=i,y expr=\thisrow{mean-3-T2VT}+\thisrow{std-3-T2VT},col sep=comma] 
	{csv/2-rooms/sin-rep-lamb-03333-lrev.csv};
	\addplot[name path=bot,draw=none,forget plot]
	table [name path=top,x=i,y expr=\thisrow{mean-3-T2VT}-\thisrow{std-3-T2VT},col sep=comma] 
	{csv/2-rooms/sin-rep-lamb-03333-lrev.csv};
	\addplot [forget plot, draw=none,opacity=0.15,pattern=north east lines,fill=blue1]
	fill between[of=top and bot];
	\addplot
	table [x=i,y=mean-3-MGVT,col sep=comma] 
	{csv/2-rooms/sin-rep-lamb-03333-lrev.csv};
	\addplot[name path=top,draw=none,forget plot]
	table [name path=top,x=i,y expr=\thisrow{mean-3-MGVT}+\thisrow{std-3-MGVT},col sep=comma] 
	{csv/2-rooms/sin-rep-lamb-03333-lrev.csv};
	\addplot[name path=bot,draw=none,forget plot]
	table [name path=top,x=i,y expr=\thisrow{mean-3-MGVT}-\thisrow{std-3-MGVT},col sep=comma] 
	{csv/2-rooms/sin-rep-lamb-03333-lrev.csv};
	\addplot [forget plot, draw=none,opacity=0.15,pattern=north east lines,fill=violet]
	fill between[of=top and bot];

	\end{axis}
	\end{tikzpicture}
	\label{fig:2-rooms-sin-rep-lamb-03333}
}

	\caption{Average return achived by the algorithms with $95\%$ confidence intervals computed using $50$ independent runs.}
	\label{fig:2-rooms}
\end{figure*}

\subsection{Three-Rooms Environment}\label{subsec:3-rooms}
This scenario is an extension of the previous one, hence the environmental settings remain the same, the agent has just an additional wall to traverse in order to reach his/her goal. Of course, the position of the door for this additional wall is still unknown to the agent. To increase the complexity of the dynamics, we let the two doors move in opposite directions starting at the two far ends of the room, each door with the same dynamic. In Figure~\ref{fig:3-rooms}, we compare $c$-T2VT with $c$-MGVT using still $95\%$ confidence intervals. As for the polynomial dynamics, we observe a better performance of $c$-T2VT \wrt $c$-MGVT, whereas, for the sinusoidal dynamics, we have essentially the same behavior as in the two rooms environment. Finally, in the linear dynamics, we observe that the difference in performance between the two algorithms is not statistically significant.


\begin{figure*}[!t]
	\centering
	%
	%\captionsetup{width=\textwidth}
	%
	\begin{tikzpicture}
	\begin{customlegend}[legend columns=8,legend style={align=left,draw=none,column sep=2ex,font=\footnotesize},legend entries={ 1-T2VT, 1-MGVT, 3-T2VT, 3-MGVT}]
	%
	%\addlegendimage{green!40!black, dashdotted, thick}
	%
	%\addlegendimage{empty legend}
	%
	\addlegendimage{green1!60!black,ultra thick,dashed}   
	\addlegendimage{orange!80!white,dotted,ultra thick}
	\addlegendimage{blue1,ultra thick}
	\addlegendimage{violet!90!white,dash dot,ultra thick}
	
	\end{customlegend}
	\end{tikzpicture}
\subfigure[3-rooms polynomial dynamic.]
{
	\begin{tikzpicture}
	\begin{axis}[
	width=0.32\textwidth,
	height=4cm,
	xmin=50,
	xmax=14950,
	xtick={3000,6000,...,12000},
	ymin=0,
	ymax=0.7,
	ytick={0,0.2,...,0.6},
	%
	%xmajorgrids,
	%ymajorgrids,
	%grid style={loosely dashed,ultra thin},
	%
	xlabel=Iterations,
	ylabel=Average Return,
	mark options={scale=0.2},
	cycle list name = custom,
	scaled x ticks=base 10:-3
	]
	\addplot
	table [x=i,y=mean-1-T2VT,col sep=comma] 
	{csv/3-rooms/polynomial-lrev.csv};
	\addplot[name path=top,draw=none,forget plot]
	table [name path=top,x=i,y expr=\thisrow{mean-1-T2VT}+\thisrow{std-1-T2VT},col sep=comma] 
	{csv/3-rooms/polynomial-lrev.csv};
	\addplot[name path=bot,draw=none,forget plot]
	table [name path=top,x=i,y expr=\thisrow{mean-1-T2VT}-\thisrow{std-1-T2VT},col sep=comma] 
	{csv/3-rooms/polynomial-lrev.csv};
	\addplot [forget plot, draw=none,opacity=0.4,pattern=north east lines,fill=green1!60!black]
	fill between[of=top and bot];
	\addplot
	table [x=i,y=mean-1-MGVT,col sep=comma] 
	{csv/3-rooms/polynomial-lrev.csv};
	\addplot[name path=top,draw=none,forget plot]
	table [name path=top,x=i,y expr=\thisrow{mean-1-MGVT}+\thisrow{std-1-MGVT},col sep=comma] 
	{csv/3-rooms/polynomial-lrev.csv};
	\addplot[name path=bot,draw=none,forget plot]
	table [name path=top,x=i,y expr=\thisrow{mean-1-MGVT}-\thisrow{std-1-MGVT},col sep=comma] 
	{csv/3-rooms/polynomial-lrev.csv};
	\addplot [forget plot, draw=none,opacity=0.4,pattern=north east lines,fill=orange]
	fill between[of=top and bot];
	\addplot
	table [x=i,y=mean-3-T2VT,col sep=comma] 
	{csv/3-rooms/polynomial-lrev.csv};
	\addplot[name path=top,draw=none,forget plot]
	table [name path=top,x=i,y expr=\thisrow{mean-3-T2VT}+\thisrow{std-3-T2VT},col sep=comma] 
	{csv/3-rooms/polynomial-lrev.csv};
	\addplot[name path=bot,draw=none,forget plot]
	table [name path=top,x=i,y expr=\thisrow{mean-3-T2VT}-\thisrow{std-3-T2VT},col sep=comma] 
	{csv/3-rooms/polynomial-lrev.csv};
	\addplot [forget plot, draw=none,opacity=0.15,pattern=north east lines,fill=blue1]
	fill between[of=top and bot];
	\addplot
	table [x=i,y=mean-3-MGVT,col sep=comma] 
	{csv/3-rooms/polynomial-lrev.csv};
	\addplot[name path=top,draw=none,forget plot]
	table [name path=top,x=i,y expr=\thisrow{mean-3-MGVT}+\thisrow{std-3-MGVT},col sep=comma] 
	{csv/3-rooms/polynomial-lrev.csv};
	\addplot[name path=bot,draw=none,forget plot]
	table [name path=top,x=i,y expr=\thisrow{mean-3-MGVT}-\thisrow{std-3-MGVT},col sep=comma] 
	{csv/3-rooms/polynomial-lrev.csv};
	\addplot [forget plot, draw=none,opacity=0.15,pattern=north east lines,fill=violet]
	fill between[of=top and bot];

	\end{axis}
	\end{tikzpicture}
	\label{fig:3-rooms-polynomial}
}
\quad
\subfigure[3-rooms linear dynamic.]
{
	\begin{tikzpicture}
	\begin{axis}[
	width=0.33\textwidth,
	height=4cm,
	xmin=50,
	xmax=14950,
	xtick={3000,6000,...,12000},
	ymin=0,
	ymax=0.75,
	ytick={0.1,0.3,...,0.7},
	%
	%xmajorgrids,
	%ymajorgrids,
	%grid style={loosely dashed,ultra thin},
	%
	xlabel=Iterations,
	ylabel=Average Return,
	mark options={scale=0.2},
	cycle list name = custom,
	scaled x ticks=base 10:-3
	]
	\addplot
	table [x=i,y=mean-1-T2VT,col sep=comma] 
	{csv/3-rooms/linear-lrev.csv};
	\addplot[name path=top,draw=none,forget plot]
	table [name path=top,x=i,y expr=\thisrow{mean-1-T2VT}+\thisrow{std-1-T2VT},col sep=comma] 
	{csv/3-rooms/linear-lrev.csv};
	\addplot[name path=bot,draw=none,forget plot]
	table [name path=top,x=i,y expr=\thisrow{mean-1-T2VT}-\thisrow{std-1-T2VT},col sep=comma] 
	{csv/3-rooms/linear-lrev.csv};
	\addplot [forget plot, draw=none,opacity=0.4,pattern=north east lines,fill=green1!60!black]
	fill between[of=top and bot];
	\addplot
	table [x=i,y=mean-1-MGVT,col sep=comma] 
	{csv/3-rooms/linear-lrev.csv};
	\addplot[name path=top,draw=none,forget plot]
	table [name path=top,x=i,y expr=\thisrow{mean-1-MGVT}+\thisrow{std-1-MGVT},col sep=comma] 
	{csv/3-rooms/linear-lrev.csv};
	\addplot[name path=bot,draw=none,forget plot]
	table [name path=top,x=i,y expr=\thisrow{mean-1-MGVT}-\thisrow{std-1-MGVT},col sep=comma] 
	{csv/3-rooms/linear-lrev.csv};
	\addplot [forget plot, draw=none,opacity=0.4,pattern=north east lines,fill=orange]
	fill between[of=top and bot];
	\addplot
	table [x=i,y=mean-3-T2VT,col sep=comma] 
	{csv/3-rooms/linear-lrev.csv};
	\addplot[name path=top,draw=none,forget plot]
	table [name path=top,x=i,y expr=\thisrow{mean-3-T2VT}+\thisrow{std-3-T2VT},col sep=comma] 
	{csv/3-rooms/linear-lrev.csv};
	\addplot[name path=bot,draw=none,forget plot]
	table [name path=top,x=i,y expr=\thisrow{mean-3-T2VT}-\thisrow{std-3-T2VT},col sep=comma] 
	{csv/3-rooms/linear-lrev.csv};
	\addplot [forget plot, draw=none,opacity=0.15,pattern=north east lines,fill=blue1]
	fill between[of=top and bot];
	\addplot
	table [x=i,y=mean-3-MGVT,col sep=comma] 
	{csv/3-rooms/linear-lrev.csv};
	\addplot[name path=top,draw=none,forget plot]
	table [name path=top,x=i,y expr=\thisrow{mean-3-MGVT}+\thisrow{std-3-MGVT},col sep=comma] 
	{csv/3-rooms/linear-lrev.csv};
	\addplot[name path=bot,draw=none,forget plot]
	table [name path=top,x=i,y expr=\thisrow{mean-3-MGVT}-\thisrow{std-3-MGVT},col sep=comma] 
	{csv/3-rooms/linear-lrev.csv};
	\addplot [forget plot, draw=none,opacity=0.15,pattern=north east lines,fill=violet]
	fill between[of=top and bot];	

	\end{axis}
	\end{tikzpicture}
	\label{fig:3-rooms-linear}
}
\quad
\subfigure[3-rooms $\sin$ dynamic.]
{
	\begin{tikzpicture}
	\begin{axis}[
	width=0.32\textwidth,
	height=4cm,
	xmin=50,
	xmax=14950,
	xtick={3000,6000,...,12000},
	ymin=0,
	ymax=0.9,
	ytick={0,0.2,...,0.8},
	%
	%xmajorgrids,
	%ymajorgrids,
	%grid style={loosely dashed,ultra thin},
	%
	xlabel=Iterations,
	ylabel=Average Return,
	mark options={scale=0.2},
	cycle list name = custom,
	scaled x ticks=base 10:-3
	]
	\addplot
	table [x=i,y=mean-1-T2VT,col sep=comma] 
	{csv/3-rooms/sin-rep-lamb-03333-lrev.csv};
	\addplot[name path=top,draw=none,forget plot]
	table [name path=top,x=i,y expr=\thisrow{mean-1-T2VT}+\thisrow{std-1-T2VT},col sep=comma] 
	{csv/3-rooms/sin-rep-lamb-03333-lrev.csv};
	\addplot[name path=bot,draw=none,forget plot]
	table [name path=top,x=i,y expr=\thisrow{mean-1-T2VT}-\thisrow{std-1-T2VT},col sep=comma] 
	{csv/3-rooms/sin-rep-lamb-03333-lrev.csv};
	\addplot [forget plot, draw=none,opacity=0.4,pattern=north east lines,fill=green1!60!black]
	fill between[of=top and bot];
	\addplot
	table [x=i,y=mean-1-MGVT,col sep=comma] 
	{csv/3-rooms/sin-rep-lamb-03333-lrev.csv};
	\addplot[name path=top,draw=none,forget plot]
	table [name path=top,x=i,y expr=\thisrow{mean-1-MGVT}+\thisrow{std-1-MGVT},col sep=comma] 
	{csv/3-rooms/sin-rep-lamb-03333-lrev.csv};
	\addplot[name path=bot,draw=none,forget plot]
	table [name path=top,x=i,y expr=\thisrow{mean-1-MGVT}-\thisrow{std-1-MGVT},col sep=comma] 
	{csv/3-rooms/sin-rep-lamb-03333-lrev.csv};
	\addplot [forget plot, draw=none,opacity=0.4,pattern=north east lines,fill=orange]
	fill between[of=top and bot];
	\addplot
	table [x=i,y=mean-3-T2VT,col sep=comma] 
	{csv/3-rooms/sin-rep-lamb-03333-lrev.csv};
	\addplot[name path=top,draw=none,forget plot]
	table [name path=top,x=i,y expr=\thisrow{mean-3-T2VT}+\thisrow{std-3-T2VT},col sep=comma] 
	{csv/3-rooms/sin-rep-lamb-03333-lrev.csv};
	\addplot[name path=bot,draw=none,forget plot]
	table [name path=top,x=i,y expr=\thisrow{mean-3-T2VT}-\thisrow{std-3-T2VT},col sep=comma] 
	{csv/3-rooms/sin-rep-lamb-03333-lrev.csv};
	\addplot [forget plot, draw=none,opacity=0.15,pattern=north east lines,fill=blue1]
	fill between[of=top and bot];
	\addplot
	table [x=i,y=mean-3-MGVT,col sep=comma] 
	{csv/3-rooms/sin-rep-lamb-03333-lrev.csv};
	\addplot[name path=top,draw=none,forget plot]
	table [name path=top,x=i,y expr=\thisrow{mean-3-MGVT}+\thisrow{std-3-MGVT},col sep=comma] 
	{csv/3-rooms/sin-rep-lamb-03333-lrev.csv};
	\addplot[name path=bot,draw=none,forget plot]
	table [name path=top,x=i,y expr=\thisrow{mean-3-MGVT}-\thisrow{std-3-MGVT},col sep=comma] 
	{csv/3-rooms/sin-rep-lamb-03333-lrev.csv};
	\addplot [forget plot, draw=none,opacity=0.15,pattern=north east lines,fill=violet]
	fill between[of=top and bot];

	\end{axis}
	\end{tikzpicture}
	\label{fig:3-rooms-sin-rep-lamb-03333}
}

	\caption{Average return achived by the algorithms with $95\%$ confidence intervals computed using $50$ independent runs.}
	\label{fig:3-rooms}
\end{figure*}

\subsection{Mountain Car}\label{subsec:mountain-car}
In this section, we consider a classic control environment known as Mountain Car~\cite{sutton2011reinforcement}. In Mountain Car the agent is an underpowered car whose goal is to escape from a valley. Due to the limitation to its engine, the car has to alternately drive up along the two slopes of the valley in order to gain sufficient momentum to overcome gravity. In Figure~\ref{fig:mountain-car}, we have a comparison between $c$-T2VT and $c$-MGVT on the three proposed dynamics. We observe a statistically significant improvement in the polynomial dynamics across the whole learning process for $c$-T2VT, which also extends to the sinusoidal dynamic case. We would like to highlight the differences between the sinusoidal dynamic in Mountain Car \wrt the previous two environments. Here our algorithm is able to perform better due to a bias-variance trade-off in its favor. More specifically, the value functions vary more rapidly in Mountain Car than in the room environments \wrt a change in the task-defining parameters. Therefore, our prior estimator has less variance, since it considers only the latest sources, at the cost of a bias increase, because it discards the first task, which has the same parametrization as the target (due to the periodicity of the $\sin$ function). $c$-MGVT considers all the source tasks with the same uniform weight, hence it is able to consider the tasks that have an equivalent parametrization to the target, but are farther behind in the history of the sources. This fact decreases the bias at the cost of accepting a greater variance in the prior estimation. In Mountain Car the trade-off proposed by our algorithm is more advantageous than the one proposed by $c$-MGVT due to the more rapidly changing behavior of the value functions. As for the linear dynamics, we do not observe a statistically significant difference in performance between the two algorithms. 

\begin{figure*}[!t]
	\centering
	%
	%\captionsetup{width=\textwidth}
	%
	\begin{tikzpicture}
	\begin{customlegend}[legend columns=8,legend style={align=left,draw=none,column sep=2ex,font=\footnotesize},legend entries={ 1-T2VT, 1-MGVT, 3-T2VT, 3-MGVT}]
	%
	%\addlegendimage{green!40!black, dashdotted, thick}
	%
	%\addlegendimage{empty legend}
	%
	\addlegendimage{green1!60!black,ultra thick,dashed}   
	\addlegendimage{orange!80!white,dotted,ultra thick}
	\addlegendimage{blue1,ultra thick}
	\addlegendimage{violet!90!white,dash dot,ultra thick}
	
	\end{customlegend}
	\end{tikzpicture}
\subfigure[Mountain Car polynomial dynamic.]
{
	\begin{tikzpicture}
	\begin{axis}[
	width=0.32\textwidth,
	height=4cm,
	xmin=200,
	xmax=74800,
	minor x tick num=5,
	ymin=-75,
	ymax=-45,
	ytick={-75,-70,...,-45},
	%
	%xmajorgrids,
	%ymajorgrids,
	%grid style={loosely dashed,ultra thin},
	%
	xlabel=Iterations,
	ylabel=Average Return,
	mark options={scale=0.2},
	cycle list name = custom,
	scaled x ticks=base 10:-3
	]
	\addplot
	table [x=i,y=mean-1-T2VT,col sep=comma] 
	{csv/mountain-car/polynomial-lrev.csv};
	\addplot[name path=top,draw=none,forget plot]
	table [name path=top,x=i,y expr=\thisrow{mean-1-T2VT}+\thisrow{std-1-T2VT},col sep=comma] 
	{csv/mountain-car/polynomial-lrev.csv};
	\addplot[name path=bot,draw=none,forget plot]
	table [name path=top,x=i,y expr=\thisrow{mean-1-T2VT}-\thisrow{std-1-T2VT},col sep=comma] 
	{csv/mountain-car/polynomial-lrev.csv};
	\addplot [forget plot, draw=none,opacity=0.4,pattern=north east lines,fill=green1!60!black]
	fill between[of=top and bot];
	\addplot
	table [x=i,y=mean-1-MGVT,col sep=comma] 
	{csv/mountain-car/polynomial-lrev.csv};
	\addplot[name path=top,draw=none,forget plot]
	table [name path=top,x=i,y expr=\thisrow{mean-1-MGVT}+\thisrow{std-1-MGVT},col sep=comma] 
	{csv/mountain-car/polynomial-lrev.csv};
	\addplot[name path=bot,draw=none,forget plot]
	table [name path=top,x=i,y expr=\thisrow{mean-1-MGVT}-\thisrow{std-1-MGVT},col sep=comma] 
	{csv/mountain-car/polynomial-lrev.csv};
	\addplot [forget plot, draw=none,opacity=0.4,pattern=north east lines,fill=orange]
	fill between[of=top and bot];
	\addplot
	table [x=i,y=mean-3-T2VT,col sep=comma] 
	{csv/mountain-car/polynomial-lrev.csv};
	\addplot[name path=top,draw=none,forget plot]
	table [name path=top,x=i,y expr=\thisrow{mean-3-T2VT}+\thisrow{std-3-T2VT},col sep=comma] 
	{csv/mountain-car/polynomial-lrev.csv};
	\addplot[name path=bot,draw=none,forget plot]
	table [name path=top,x=i,y expr=\thisrow{mean-3-T2VT}-\thisrow{std-3-T2VT},col sep=comma] 
	{csv/mountain-car/polynomial-lrev.csv};
	\addplot [forget plot, draw=none,opacity=0.15,pattern=north east lines,fill=blue1]
	fill between[of=top and bot];
	\addplot
	table [x=i,y=mean-3-MGVT,col sep=comma] 
	{csv/mountain-car/polynomial-lrev.csv};
	\addplot[name path=top,draw=none,forget plot]
	table [name path=top,x=i,y expr=\thisrow{mean-3-MGVT}+\thisrow{std-3-MGVT},col sep=comma] 
	{csv/mountain-car/polynomial-lrev.csv};
	\addplot[name path=bot,draw=none,forget plot]
	table [name path=top,x=i,y expr=\thisrow{mean-3-MGVT}-\thisrow{std-3-MGVT},col sep=comma] 
	{csv/mountain-car/polynomial-lrev.csv};
	\addplot [forget plot, draw=none,opacity=0.15,pattern=north east lines,fill=violet]
	fill between[of=top and bot];

	\end{axis}
	\end{tikzpicture}
	\label{fig:mountain-car-polynomial}
}
\quad
\subfigure[Mountain Car linear dynamic.]
{
	\begin{tikzpicture}
	\begin{axis}[
	width=0.32\textwidth,
	height=4cm,
	xmin=200,
	xmax=74800,
	minor x tick num=5,
	ymin=-80,
	ymax=-40,
	ytick={-80,-70,...,-40},
	%
	%xmajorgrids,
	%ymajorgrids,
	%grid style={loosely dashed,ultra thin},
	%
	xlabel=Iterations,
	ylabel=Average Return,
	mark options={scale=0.2},
	cycle list name = custom,
	scaled x ticks=base 10:-4
	]
	\addplot
	table [x=i,y=mean-1-T2VT,col sep=comma] 
	{csv/mountain-car/linear-lrev.csv};
	\addplot[name path=top,draw=none,forget plot]
	table [name path=top,x=i,y expr=\thisrow{mean-1-T2VT}+\thisrow{std-1-T2VT},col sep=comma] 
	{csv/mountain-car/linear-lrev.csv};
	\addplot[name path=bot,draw=none,forget plot]
	table [name path=top,x=i,y expr=\thisrow{mean-1-T2VT}-\thisrow{std-1-T2VT},col sep=comma] 
	{csv/mountain-car/linear-lrev.csv};
	\addplot [forget plot, draw=none,opacity=0.4,pattern=north east lines,fill=green1!60!black]
	fill between[of=top and bot];
	\addplot
	table [x=i,y=mean-1-MGVT,col sep=comma] 
	{csv/mountain-car/linear-lrev.csv};
	\addplot[name path=top,draw=none,forget plot]
	table [name path=top,x=i,y expr=\thisrow{mean-1-MGVT}+\thisrow{std-1-MGVT},col sep=comma] 
	{csv/mountain-car/linear-lrev.csv};
	\addplot[name path=bot,draw=none,forget plot]
	table [name path=top,x=i,y expr=\thisrow{mean-1-MGVT}-\thisrow{std-1-MGVT},col sep=comma] 
	{csv/mountain-car/linear-lrev.csv};
	\addplot [forget plot, draw=none,opacity=0.4,pattern=north east lines,fill=orange]
	fill between[of=top and bot];
	\addplot
	table [x=i,y=mean-3-T2VT,col sep=comma] 
	{csv/mountain-car/linear-lrev.csv};
	\addplot[name path=top,draw=none,forget plot]
	table [name path=top,x=i,y expr=\thisrow{mean-3-T2VT}+\thisrow{std-3-T2VT},col sep=comma] 
	{csv/mountain-car/linear-lrev.csv};
	\addplot[name path=bot,draw=none,forget plot]
	table [name path=top,x=i,y expr=\thisrow{mean-3-T2VT}-\thisrow{std-3-T2VT},col sep=comma] 
	{csv/mountain-car/linear-lrev.csv};
	\addplot [forget plot, draw=none,opacity=0.15,pattern=north east lines,fill=blue1]
	fill between[of=top and bot];
	\addplot
	table [x=i,y=mean-3-MGVT,col sep=comma] 
	{csv/mountain-car/linear-lrev.csv};
	\addplot[name path=top,draw=none,forget plot]
	table [name path=top,x=i,y expr=\thisrow{mean-3-MGVT}+\thisrow{std-3-MGVT},col sep=comma] 
	{csv/mountain-car/linear-lrev.csv};
	\addplot[name path=bot,draw=none,forget plot]
	table [name path=top,x=i,y expr=\thisrow{mean-3-MGVT}-\thisrow{std-3-MGVT},col sep=comma] 
	{csv/mountain-car/linear-lrev.csv};
	\addplot [forget plot, draw=none,opacity=0.15,pattern=north east lines,fill=violet]
	fill between[of=top and bot];

	\end{axis}
	\end{tikzpicture}
	\label{fig:mountain-car-linear}
}	
\quad
\subfigure[Mountain Car $\sin$ dynamic.]
{
	\begin{tikzpicture}
	\begin{axis}[
	width=0.32\textwidth,
	height=4cm,
	xmin=200,
	xmax=74800,
	minor x tick num=5,
	ymin=-85,
	ymax=-60,
	ytick={-85,-80,...,-60},
	%
	%xmajorgrids,
	%ymajorgrids,
	%grid style={loosely dashed,ultra thin},
	%
	xlabel=Iterations,
	ylabel=Average Return,
	mark options={scale=0.2},
	cycle list name = custom,
	scaled x ticks=base 10:-3
	]
	\addplot
	table [x=i,y=mean-1-T2VT,col sep=comma] 
	{csv/mountain-car/sin-rep-lamb-03333-lrev.csv};
	\addplot[name path=top,draw=none,forget plot]
	table [name path=top,x=i,y expr=\thisrow{mean-1-T2VT}+\thisrow{std-1-T2VT},col sep=comma] 
	{csv/mountain-car/sin-rep-lamb-03333-lrev.csv};
	\addplot[name path=bot,draw=none,forget plot]
	table [name path=top,x=i,y expr=\thisrow{mean-1-T2VT}-\thisrow{std-1-T2VT},col sep=comma] 
	{csv/mountain-car/sin-rep-lamb-03333-lrev.csv};
	\addplot [forget plot, draw=none,opacity=0.4,pattern=north east lines,fill=green1!60!black]
	fill between[of=top and bot];
	\addplot
	table [x=i,y=mean-1-MGVT,col sep=comma] 
	{csv/mountain-car/sin-rep-lamb-03333-lrev.csv};
	\addplot[name path=top,draw=none,forget plot]
	table [name path=top,x=i,y expr=\thisrow{mean-1-MGVT}+\thisrow{std-1-MGVT},col sep=comma] 
	{csv/mountain-car/sin-rep-lamb-03333-lrev.csv};
	\addplot[name path=bot,draw=none,forget plot]
	table [name path=top,x=i,y expr=\thisrow{mean-1-MGVT}-\thisrow{std-1-MGVT},col sep=comma] 
	{csv/mountain-car/sin-rep-lamb-03333-lrev.csv};
	\addplot [forget plot, draw=none,opacity=0.4,pattern=north east lines,fill=orange]
	fill between[of=top and bot];
	\addplot
	table [x=i,y=mean-3-T2VT,col sep=comma] 
	{csv/mountain-car/sin-rep-lamb-03333-lrev.csv};
	\addplot[name path=top,draw=none,forget plot]
	table [name path=top,x=i,y expr=\thisrow{mean-3-T2VT}+\thisrow{std-3-T2VT},col sep=comma] 
	{csv/mountain-car/sin-rep-lamb-03333-lrev.csv};
	\addplot[name path=bot,draw=none,forget plot]
	table [name path=top,x=i,y expr=\thisrow{mean-3-T2VT}-\thisrow{std-3-T2VT},col sep=comma] 
	{csv/mountain-car/sin-rep-lamb-03333-lrev.csv};
	\addplot [forget plot, draw=none,opacity=0.15,pattern=north east lines,fill=blue1]
	fill between[of=top and bot];
	\addplot
	table [x=i,y=mean-3-MGVT,col sep=comma] 
	{csv/mountain-car/sin-rep-lamb-03333-lrev.csv};
	\addplot[name path=top,draw=none,forget plot]
	table [name path=top,x=i,y expr=\thisrow{mean-3-MGVT}+\thisrow{std-3-MGVT},col sep=comma] 
	{csv/mountain-car/sin-rep-lamb-03333-lrev.csv};
	\addplot[name path=bot,draw=none,forget plot]
	table [name path=top,x=i,y expr=\thisrow{mean-3-MGVT}-\thisrow{std-3-MGVT},col sep=comma] 
	{csv/mountain-car/sin-rep-lamb-03333-lrev.csv};
	\addplot [forget plot, draw=none,opacity=0.15,pattern=north east lines,fill=violet]
	fill between[of=top and bot];

	\end{axis}
	\end{tikzpicture}
	\label{fig:mountain-car-sin-rep-lamb-03333}
}

	\caption{Average return achived by the algorithms with $95\%$ confidence intervals computed using $50$ independent runs.}
	\label{fig:mountain-car}
\end{figure*} 

\subsection{Choosing $\lambda$ through Maximum-Likelihood}\label{subsec:sensitivity}
Up to now, we have kept $\lambda$ and $H$ at given constant values in order to allow a more faithful comparison between $c$-T2VT and $c$-MGVT (the matrix $H$ was the same in the two algorithms whereas $\lambda$ was set to $0.3333$ leveraging the intuition that the more recent tasks were more important than the older ones). Of course, from the theory of Kernel Density Estimation, we know that appropriately setting these parameters is crucial to get a good estimate of the density. Therefore, an automatic data-driven approach would be desirable. In the context of this work, we propose a maximum likelihood scheme (assuming $M_i=1 ~\forall ~i$):
\begin{align}
	&\arg \max_{\lambda} L_{\lambda}=\prod_{h=1}^n\frac{\hat{p}_{-h}(\theta_{h},t_h)}{\hat{p}_{-h}(t_h)}, \text{where} \label{eq:likelihood-optimization}\\
	&\hat{p}_{-h}(\theta_{h},t_h)=\frac{1}{a_0(-\rho) (\bar{N}-1) \lambda |H|^{\frac{1}{2}}}\sum_{i \ne h}K_T\left(\frac{t_h-t_i}{\lambda}\right)
	K_S(H^{-\frac{1}{2}}(\theta_{h}-\theta_{i}))\nonumber\\
	&\hat{p}_{-h}(t_h)=\int \hat{p}_{-h}(\theta_{h},t_h) d\theta_{h}=\frac{1}{a_0(-\rho) (\bar{N}-1) \lambda}\sum_{i \ne h} K_T\left(\frac{t_h-t_i}{\lambda}\right). \nonumber	
\end{align}
In Appendix~\ref{subsec:lambda-sensitivity-results}, we report the performance achievable with this approach together with a sensitivity analysis \wrt the parameter $\lambda$. Furthermore, still in Appendix~\ref{subsec:lambda-sensitivity-results}, we include some implementation details related to the optimization of the likelihood function in Equation~\eqref{eq:likelihood-optimization}. Note that, in accordance with what has been done in~\cite{tirinzoni2018transfer}, the spatial bandwidth is set to $10^{-5}I$ which would prevent us from successfully optimizing Equation~\eqref{eq:likelihood-optimization} due to numerical issues, hence we set it to $I$ in order to select the best lambda.

\section{Discussion and Conclusions}
In this paper, we presented a time-variant approach for transferring value functions through a variational scheme. In order to deal with a time-variant distribution of the tasks, we have devised a suitable estimator for the prior to be used in the variational scheme providing its uniform consistency over a compact subset of $\mathbb{R}^p\times(0,1]$. We have, then, provided a finite sample analysis on the performance of the variational transfer algorithm based on our estimator, enabling a theoretical comparison with the time-invariant version of~\cite{tirinzoni2018transfer}. Finally, we have experimentally proved our algorithm abilities to deal with time-variant distributions.

Notice that discriminating the source tasks according to time is an additional step that brings transfer learning approaches and learning in non-stationary environments a bit closer together~\cite{minku2019transfer}. It is also important to highlight the fact that, instead of considering time, we could switch to any other variable (\eg the parameter defining the task itself, which, in the context of Mountain Car, is the base speed of the agent) as long as it is available together with each source solution and we can properly remap it into $(0,1]$. This could allow us to leverage completely different structures in order to perform transfer to the target task. Moreover, in order to further improve the capabilities of the algorithm to deal with time-variant distributions, it would be relevant to leverage Gaussian Processes with a non-stationary covariance function as future work~\cite{remes2017non}. Finally, we would also like to highlight the possibility of using this time-variant transfer paradigm also in lifelong learning scenarios~\cite{chen2018lifelong} as a potential future direction.

\section*{Broader Impact}
%We do not see concrete ethical issues or societal consequences of this work in the foreseeable future.
Leveraging the time-variant structure in transfer learning tasks strengthens the bond of transfer learning itself with different other fields such as learning in non-stationary environments and life-long learning. This could potentially allow the exploitation of transfer learning into the previously mentioned fields, giving life to new advancements. Furthermore, being able to take into account the previously mentioned structure gives access to one more tool to deal with real-world applications, which are often time-variant, with a potential reduction for the time-to-market of RL based systems. This tool could be leveraged in order to make an RL system more able to adapt through time. Therefore, it could give a greater societal trust toward this system or, on the other hand, could potentially increase mistrust due to a higher decision-making power given to the system itself. In both cases, more awareness on the part of the general public towards artificial intelligence is needed in order to both avoid overconfidence and blind mistrust of these systems.

\begin{ack}
Use unnumbered first level headings for the acknowledgments. All acknowledgments
go at the end of the paper before the list of references. Moreover, you are required to declare 
funding (financial activities supporting the submitted work) and competing interests (related financial activities outside the submitted work). 
More information about this disclosure can be found at: \url{https://neurips.cc/Conferences/2020/PaperInformation/FundingDisclosure}.

Do {\bf not} include this section in the anonymized submission, only in the final paper. You can use the \texttt{ack} environment provided in the style file to autmoatically hide this section in the anonymized submission.
\end{ack}

\bibliography{T2VT}
\bibliographystyle{plain}
\clearpage
\appendix
\section{Proof of Theorem 3.6}\label{appx:thm3.6}
\begin{definition}\label{def:1}For a spatial kernel $K_S$: $\mu_l(K_S) = \int \theta^lK_S(\theta)d\theta$
\end{definition}
\begin{definition}For a temporal kernel $K_T$: $a_l(-\rho) = \int_{-\rho}^{1}t^l K_T(t)dt$
\end{definition}
\begin{restatable}[Estimator consistency on the right boundary]{lemma}{estimatorconsistency}\label{lma:estimator_consistency}
	Let $t \in B_r = \left \{\tau: 1-\lambda \le \tau \le 1\right \}$ then under assumptions of Theorem \ref{thm:convergence}:
	\[
	\mathbb{E}[\hat{p}(\theta, t)|\mathcal{M}]=p(\theta,t) + O(\lambda) + O(tr(H)), 
	\]
	where $\mathcal{M}$ represents all the discrete random variables $M_i$ for $i=1 \ldots n$.	
\end{restatable}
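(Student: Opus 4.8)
The plan is to integrate out the two layers of randomness separately --- first the spatial samples $\theta_{ij}$, conditionally on $\mathcal{M}$, then the deterministic temporal grid --- and then verify that the normalising constant $a_0(-\rho)N(t)\lambda|H|^{1/2}$ exactly calibrates the resulting weighted average on the boundary layer $B_r$. First I would fix $\mathcal{M}=(M_1,\dots,M_n)$; since the $\theta_{ij}$ are mutually independent with $\theta_{ij}\sim P(\cdot,t_i)$ (Assumption~\ref{ass:task_independence}), linearity of expectation gives
\begin{equation*}
\mathbb{E}[\hat{p}(\theta,t)\mid\mathcal{M}]=\frac{1}{a_0(-\rho)N(t)\lambda|H|^{1/2}}\sum_{t_i\le t}K_T\!\Big(\tfrac{t-t_i}{\lambda}\Big)M_i\int_{\mathbb{R}^p}K_S\big(H^{-1/2}(\theta-y)\big)\,p(y,t_i)\,dy ,
\end{equation*}
and the substitution $u=H^{-1/2}(\theta-y)$ turns each inner integral into $|H|^{1/2}\int_{\mathbb{R}^p}K_S(u)\,p(\theta-H^{1/2}u,t_i)\,du$.

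Next I would expand $y\mapsto p(y,t_i)$ by Taylor's theorem about $\theta$ to second order with remainder, which is legitimate because, under the hypotheses of Theorem~\ref{thm:convergence}, $p$ has globally bounded second $\theta$-derivatives. Integrating against $K_S$, the constant term contributes $p(\theta,t_i)$ (as $\int K_S=1$); the first-order term vanishes because every odd spatial moment of $K_S$ is zero (Assumption~\ref{ass:spatial_kernel}); and the quadratic remainder is bounded uniformly by $\tfrac12\sup\|\nabla^2_\theta p\|\int\|H^{1/2}u\|^2K_S(u)\,du=O(tr(H))$, the integral being finite by the even-moment condition of Assumption~\ref{ass:spatial_kernel}. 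This gives $\int K_S(H^{-1/2}(\theta-y))p(y,t_i)\,dy=|H|^{1/2}\big(p(\theta,t_i)+O(tr(H))\big)$, uniformly over $\theta\in\mathcal{K}$ and over $i$.

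Substituting this back cancels the $|H|^{1/2}$ and leaves $\mathbb{E}[\hat{p}(\theta,t)\mid\mathcal{M}]=Z(t,\mathcal{M})\,\big(p(\theta,t)+O(\lambda)+O(tr(H))\big)$, where $Z(t,\mathcal{M}):=\big(a_0(-\rho)N(t)\lambda\big)^{-1}\sum_{t_i\le t}K_T((t-t_i)/\lambda)M_i$: indeed, since $K_T$ is compactly supported and $t_i\le t$, only indices with $t-t_i=O(\lambda)$ contribute, so Assumption~\ref{ass:lipschitz_continuity} lets me replace $p(\theta,t_i)$ by $p(\theta,t)+O(\lambda)$ on that set. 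It then remains to prove $Z(t,\mathcal{M})=1+O(\lambda)$ with probability $1$, uniformly over $t\in B_r$. This is precisely where the right-boundary restriction enters: for $t\in B_r$ the lags $(t-t_i)/\lambda$ all lie in the truncated temporal support over which $a_0(-\rho)$ is the boundary-corrected mass, so that a Riemann-sum approximation (together with the a.s.\ convergence of the $M_i$-averages over the contributing window, whose width is of order $\lambda$ and which therefore contains a number of design points of order $n\lambda\to\infty$) identifies $\sum_{t_i\le t}K_T((t-t_i)/\lambda)M_i$ with $a_0(-\rho)N(t)\lambda\,(1+O(\lambda))$; the approximation is valid because the grid spacing $1/n$ is $o(\lambda)$ (from $n^{1-\epsilon}|H|^{1/2}\lambda\to\infty$), and this mirrors the boundary-bias computation of \cite{hall2006real}. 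Plugging $Z(t,\mathcal{M})=1+O(\lambda)$ back in and using boundedness of $p$ on $\mathcal{K}\times[0,1]$ yields the claim.

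The routine steps are the order-two spatial kernel expansion and the temporal Lipschitz reduction. The hard part will be the last step: controlling $Z(t,\mathcal{M})$ uniformly in $t$ and almost surely in $\mathcal{M}$, i.e., checking that the boundary correction $a_0(-\rho)$ together with $N(t)$ renormalises the truncated temporal weighting to unit mass on $B_r$ up to the claimed $O(\lambda)$ error.
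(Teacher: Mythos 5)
Your proposal is correct in outline and coincides with the paper's proof on the spatial half: the conditional expectation over the $\theta_{ij}$, the substitution $u=H^{-1/2}(\theta-y)$, and the second-order Taylor expansion whose first-order term dies by the odd-moment condition of Assumption~\ref{ass:spatial_kernel} and whose remainder is $O(tr(H))$ are exactly steps (\ref{eq:3})--(\ref{eq:6}) of the paper. Where you diverge is the temporal half. The paper silently converts the sum over the grid $t_i$ into an integral in $\tau$ from the very first line (so that $\sum_i M_i=N(t)$ cancels the normalisation exactly), then substitutes $v=(t-\tau)/\lambda$ and Taylor-expands $p(\theta,t-\lambda v)$ to second order, obtaining the bias $-\lambda p'(\theta,t)\,a_1(-\rho)/a_0(-\rho)+O(\lambda^2)=O(\lambda)$ because the truncated first moment $a_1(-\rho)$ does not vanish at the boundary. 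You instead invoke the Lipschitz Assumption~\ref{ass:lipschitz_continuity} to replace $p(\theta,t_i)$ by $p(\theta,t)+O(\lambda)$ and then reduce everything to showing $Z(t,\mathcal{M})=1+O(\lambda)$. Your route is arguably cleaner (it actually uses Assumption~\ref{ass:lipschitz_continuity}, which the paper's proof does not need) and is more honest about the discrete grid: the Riemann-sum and $M_i$-averaging errors you isolate in $Z$ are swept under the rug by the paper's sum-to-integral "equality", and your almost-sure qualifier on $\mathcal{M}$ is a point the paper glosses over, since for a fixed adverse realisation of the $M_i$ the normalisation need not be close to $1$. Two cautions for when you execute the last step. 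First, whether $Z\to 1$ depends on whether the sum runs only over $t_i\le t$ (in which case the kernel arguments lie in $[0,1]$ and the natural normaliser is $\int_0^1K_T$, not $a_0(-\rho)=\int_{-\rho}^1K_T$) or over all $t_i\le 1$ (matching $a_0(-\rho)$); the paper's own proof integrates $\tau$ up to $1$, i.e.\ past $t$, which conflicts with the estimator's stated summation range, and the two readings only agree at $t=1$, $\rho=0$. This is an ambiguity of the paper, not an error of yours, but you must pick one convention and stick to it. Second, the Riemann-sum error in $Z$ is of order $1/(n\lambda)$, which is not automatically $O(\lambda)$; you need the bandwidth condition $n^{1-\epsilon}|H|^{1/2}\lambda\to\infty$ to dominate it by $O(tr(H))$, so the residual should be stated as $O(\lambda)+O(tr(H))$ rather than $O(\lambda)$ alone.
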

\begin{proof}
	\begin{align}
		&\mathbb{E}[\hat{p}(\theta, t)|\mathcal{M}] = \nonumber\\
		&= \frac{1}{\bar{N} \lambda |H|^{\frac{1}{2}} a_0(-\rho)} \sum_{i=1}^{n}\int K_T\left(\frac{t-\tau}{\lambda}\right) \sum_{j=1}^{M_i} \int_{-\infty}^{+\infty} K_S\left(H^{-\frac{1}{2}}(\theta-x)\right)p(x,\tau)dxd\tau \\
		& = \frac{1}{\bar{N} \lambda \cancel{|H|^{\frac{1}{2}}} a_0(-\rho)} \sum_{i=1}^{n}\int K_T\left(\frac{t-\tau}{\lambda}\right) \sum_{j=1}^{M_i} \int_{+\infty}^{-\infty} - K_S(y)p(\theta - H^{\frac{1}{2}}y,\tau)\cancel{|H|^{\frac{1}{2}}}dyd\tau \label{eq:3}\\
		& = \frac{1}{\bar{N} \lambda a_0(-\rho)}\sum_{i=1}^{n}\int K_T\left(\frac{t-\tau}{\lambda}\right) \sum_{j=1}^{M_i} \int_{-\infty}^{+\infty}K_S(y) \bigg(p(\theta, \tau) -(H^{\frac{1}{2}}y)^T\nabla^S p(\theta,\tau) + \nonumber\\ 
		&\qquad\qquad \frac{1}{2}(H^{\frac{1}{2}}y)^T \mathcal{H}^Sp(\theta, \tau) (H^{\frac{1}{2}}y) + o(tr(H))\bigg) dyd\tau \label{eq:4}\\
		& = \frac{1}{\bar{N} \lambda a_0(-\rho)}\sum_{i=1}^{n}\int K_T\left(\frac{t-\tau}{\lambda}\right) M_i \bigg( \int_{-\infty}^{+\infty}K_S(y)p(\theta, \tau)dy \nonumber \\
		&\qquad\qquad \cancel{- \int_{-\infty}^{+\infty} K_S(y)(H^{\frac{1}{2}}y)^T\nabla^S p(\theta,\tau)dy} + \nonumber \\
		&\qquad\qquad \int_{-\infty}^{+\infty} \frac{1}{2}K_S(y)(H^{\frac{1}{2}}y)^T \mathcal{H}^Sp(\theta, \tau) (H^{\frac{1}{2}}y)dy + o(tr(H)) \bigg)d\tau \label{eq:5}\\
		& = \frac{1}{\bar{N} \lambda a_0(-\rho)}\sum_{i=1}^{n}\int K_T\left(\frac{t-\tau}{\lambda}\right) M_i \bigg(p(\theta, \tau) + \nonumber \\
		&\qquad\qquad \frac{1}{2}\mu_2(K_S)tr(H\mathcal{H}^sp(\theta, \tau)) +o(tr(H)) \bigg)d\tau \label{eq:6}\\
		& = \frac{1}{\lambda a_0(-\rho)} \int_{\frac{t-1}{\lambda}}^{\frac{t}{\lambda}}K_T\left(\frac{t-\tau}{\lambda}\right) \bigg( p(\theta, \tau) + O(tr(H))\bigg)d\tau \label{eq:7}\\
		& = \frac{1}{\lambda a_0(-\rho)}\left(\int_{-\rho}^{1} K_T\left(\frac{t-\tau}{\lambda}\right)p(\theta,\tau)d\tau + O(tr(H))\int_{-\rho}^{1}K_T\left(\frac{t-\tau}{\lambda}\right)d\tau \right) \label{eq:8}\\
		& =  \frac{\cancel{\lambda}}{\cancel{\lambda} a_0(-\rho)}\left(-\int_{1}^{-\rho}K_T(v)p(\theta,t-\lambda v)dv - O(tr(H))\int_{1}^{-\rho}K_T(v)dv \right) \label{eq:9} \\
		& = \frac{1}{a_0(-\rho)} \bigg(\int_{-\rho}^{1}K_T(v)\bigg(p(\theta,t) -\lambda vp'(\theta,t) + \nonumber \\
		&\qquad\qquad \frac{1}{2}\lambda^2 v^2 p''(\theta,t) + o(\lambda^2)\bigg)dv + O(tr(H))\bigg) \label{eq:10}\\
		& = p(\theta,t) - \lambda p'(\theta,t)\frac{a_1(-\rho)}{a_0(-\rho)} + O(\lambda^2) + O(tr(H)) \\
		& = p(\theta,t) + O(\lambda) + O(tr(H)),
	\end{align}
	where in (\ref{eq:3}) we performed a change of variable, $y=H^{-\frac{1}{2}}(\theta-x)$, in (\ref{eq:4}) we used the following Taylor expansion: 
	\[
	p(\theta - H^{\frac{1}{2}}y, \tau) = p(\theta,\tau) - (H^{\frac{1}{2}}y)^T \nabla^S p(\theta,\tau) + \frac{1}{2}(H^{\frac{1}{2}}y)^T \mathcal{H}^S p(\theta,\tau)(H^{\frac{1}{2}}y) + o(tr(H)),
	\]
	in (\ref{eq:5}) we used Assumption \ref{ass:spatial_kernel},in (\ref{eq:6}) we used Definition \ref{def:1}, in (\ref{eq:7}) we used $\frac{t-\tau}{\lambda} \in [\frac{t-1}{\lambda},\frac{t}{\lambda})$, in \ref{eq:8} we set $t=1-\rho\lambda$, which implies $\frac{t-\tau}{\lambda} \in [-\rho,\frac{1}{\lambda} -\rho)$, then we used the support of $K_T$ (assumed to be $[-1,1]$ without loss of generality) since $\lambda \rightarrow 0$. Finally, in \ref{eq:9} we used a change of variable, $\frac{t-\tau}{\lambda}=v$, and in \ref{eq:10} we used the following Taylor expansion:
	\[
	p(\theta,t-\lambda v)= p(\theta,t) -\lambda vp'(\theta,t) + \frac{1}{2}\lambda^2v^2 p''(\theta,v)+ o(\lambda^2).
	\] 
\end{proof}
Notice that we reported the consistency proof only on the right boundary because is the one we use in the context of our algorithm. The above procedure can be easily adjusted to prove consistency of the estimator on the left boundary getting the same convergence rate. Moreover, analogously, we can obtain consistency away from the two boundaries with a convergence rate squared \wrt $\lambda$.

\begin{definition}\label{def:3}For a spatial kernel $K_S$: $R(K_S) = \int K_S^2(\theta)d\theta$
\end{definition}
\begin{definition}\label{def:4}For a temporal kernel $K_T$: $b_{K_T}(-\rho) = \int_{-\rho}^{1} K_T^2(t)dt$
\end{definition}
\begin{restatable}[Variance of the estimator on the right boundary]{lemma}{estimatorvariance}\label{lma:estimator_variance}
	Let $t \in B_r = \left \{\tau: 1-\lambda \le \tau \le 1\right \}$ then under assumptions of Theorem \ref{thm:convergence}:
	\[
	\mathbb{V}ar[\hat{p}(\theta, t)|\mathcal{M}] \le \frac{C_1}{\bar{N}|H|^{\frac{1}{2}}\lambda}, 
	\]
	where $\mathcal{M}$ represents all the discrete random variables $M_i$ for $i=1 \ldots n$.	
\end{restatable}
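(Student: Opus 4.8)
The plan is to reproduce the conditional variance computation of \cite{hall2006real}, recycling the change of variables and the Riemann-sum / law-of-large-numbers device that already appear in the proof of Lemma~\ref{lma:estimator_consistency}. Conditioning on $\mathcal{M}$ fixes the number of summands in \eqref{eq:TVKDE}, and by Assumption~\ref{ass:task_independence} the kernel evaluations $K_S(H^{-\frac12}(\theta-\theta_{ij}))$ are mutually independent across both $i$ and $j$. Hence the variance of the double sum is the sum of the individual variances, so that
\[
\mathbb{V}ar[\hat p(\theta,t)\mid\mathcal{M}] = \frac{1}{a_0(-\rho)^2\,N(t)^2\,\lambda^2\,|H|}\sum_{t_i=\frac1n}^{t}K_T\!\left(\tfrac{t-t_i}{\lambda}\right)^{2}\sum_{j=1}^{M_i}\mathbb{V}ar\!\left[K_S(H^{-\frac12}(\theta-\theta_{ij}))\mid\mathcal{M}\right].
\]

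Next I would bound each block term by its second moment, $\mathbb{V}ar[K_S(H^{-\frac12}(\theta-\theta_{ij}))\mid\mathcal{M}]\le\int K_S(H^{-\frac12}(\theta-x))^2\,p(x,t_i)\,dx$, and apply the substitution $y=H^{-\frac12}(\theta-x)$ exactly as in step \eqref{eq:3} of Lemma~\ref{lma:estimator_consistency}; the Jacobian contributes a factor $|H|^{\frac12}$, giving
\[
\mathbb{V}ar[K_S(H^{-\frac12}(\theta-\theta_{ij}))\mid\mathcal{M}]\ \le\ |H|^{\frac12}\!\int K_S(y)^2\,p(\theta-H^{\frac12}y,t_i)\,dy\ \le\ |H|^{\frac12}\,\|p\|_{\infty}\,R(K_S),
\]
where $\|p\|_{\infty}<\infty$ because $p(\cdot,\cdot)$ has two bounded derivatives and is a probability density (a nonnegative integrable function with bounded gradient is bounded), and $R(K_S)<\infty$ by Definition~\ref{def:3} together with the decay and boundedness of $K_S$ assumed in Theorem~\ref{thm:convergence}. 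Substituting this back, one $|H|^{\frac12}$ cancels against the $|H|$ in the prefactor, leaving
\[
\mathbb{V}ar[\hat p(\theta,t)\mid\mathcal{M}]\ \le\ \frac{\|p\|_{\infty}\,R(K_S)}{a_0(-\rho)^2\,N(t)^2\,\lambda^2\,|H|^{\frac12}}\sum_{t_i=\frac1n}^{t}M_i\,K_T\!\left(\tfrac{t-t_i}{\lambda}\right)^{2}.
\]

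It remains to show that the temporal sum is $O(N(t)\lambda)$, and this is the only nontrivial point. Since $K_T$ is bounded and compactly supported, only the $O(n\lambda)$ grid points $t_i$ lying in the (truncated) window of width $\asymp\lambda$ below $t$ contribute; treating $\sum_{t_i\le t} M_i K_T(\tfrac{t-t_i}{\lambda})^2$ as a Riemann sum over the grid $t_i=i/n$ and invoking the strong law of large numbers on the i.i.d.\ weights $M_i$ (mean $m>0$, all moments finite) — the same approximation used in the proof of Lemma~\ref{lma:estimator_consistency} — gives, with probability one and for $n$ large, $\sum_{t_i\le t}M_i K_T(\tfrac{t-t_i}{\lambda})^2 = m n\lambda\, b_{K_T}(-\rho)\,(1+o(1))$ (with $b_{K_T}(-\rho)$ as in Definition~\ref{def:4}), while $N(t)=\sum_{t_i\le t}M_i=mnt\,(1+o(1))\ge mn(1-\lambda)\,(1+o(1))$ on the boundary set $B_r$. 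Dividing, the temporal sum is at most $C\,N(t)\lambda$ for a suitable constant, and plugging this in yields $\mathbb{V}ar[\hat p(\theta,t)\mid\mathcal{M}]\le C_1/(N(t)|H|^{\frac12}\lambda)$ with $C_1 = C\,\|p\|_{\infty}R(K_S)/a_0(-\rho)^2$, as claimed. The main obstacle is precisely this last step: because we condition on $\mathcal{M}$ the weights $M_i$ are genuine random quantities, so extracting the extra factor $\lambda$ from $\sum_{t_i} M_i K_T(\tfrac{t-t_i}{\lambda})^2$ requires the Riemann-sum approximation together with the law of large numbers for the $M_i$'s, with some care devoted to the truncated boundary window (whence the appearance of $a_0(-\rho)$ and $b_{K_T}(-\rho)$); everything else is routine.
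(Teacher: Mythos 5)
Your proposal is correct and follows essentially the same route as the paper's proof: independence across the $\theta_{ij}$ reduces the conditional variance to (a sum of) single-term variances, the substitution $y=H^{-\frac12}(\theta-x)$ produces the $|H|^{\frac12}R(K_S)$ factor, and the temporal kernel contributes $\lambda\,b_{K_T}(-\rho)$ on the truncated boundary window, yielding the $\bigl(N(t)|H|^{\frac12}\lambda\bigr)^{-1}$ rate. The only cosmetic difference is that you upper-bound by the second moment and $\|p\|_\infty$ and make the Riemann-sum/LLN step for $\sum_i M_iK_T^2$ explicit, whereas the paper carries out the exact second-moment-minus-squared-mean expansion while treating that same step implicitly.
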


\begin{proof}
	\begin{align}
		& \mathbb{V}ar[\hat{p}(\theta, t)|\mathcal{M}] = \frac{1}{\bar{N}a_0^2(-\rho)}\mathbb{V}ar\left[ \frac{1}{|H|^{\frac{1}{2}}\lambda}K_T\left(\frac{t-t_i}{\lambda}\right)K_S\left(H^{-\frac{1}{2}}(\theta-x_{ij})\right) \right] \\
		& = \frac{1}{\bar{N}a_0^2(-\rho)}\bigg( \mathbb{E}\left[ \frac{1}{|H|\lambda^2}K_T^2\left(\frac{t-t_i}{\lambda}\right)K_S^2\left(H^{-\frac{1}{2}}(\theta-x_{ij})\right) \right] - \nonumber\\
		&\qquad\qquad \mathbb{E}^2\left[ \frac{1}{|H|^{\frac{1}{2}}\lambda}K_T\left(\frac{t-t_i}{\lambda}\right)K_S\left(H^{-\frac{1}{2}}(\theta-x_{ij})\right) \right] \bigg) \\
		& = \frac{1}{\bar{N}a_0^2(-\rho)}\bigg( \int \frac{1}{|H|\lambda^2} K_T^2\left( \frac{t-\tau}{\lambda} \right) \int_{+\infty}^{-\infty}-|H|^{\frac{1}{2}}K_S^2(y)p(\theta-H^{\frac{1}{2}}y,\tau)dyd\tau - \nonumber \\
		&\qquad\qquad \left( \int \frac{1}{|H|^{\frac{1}{2}}\lambda} K_T\left( \frac{t-\tau}{\lambda} \right) \int_{+\infty}^{-\infty}-|H|^{\frac{1}{2}}K_S(y)p(\theta-H^{\frac{1}{2}}y,\tau)dyd\tau \right)^2 \bigg) \label{eq:15}\\
		& = \frac{1}{\bar{N}a_0^2(-\rho)}\bigg( \int \frac{1}{|H|^{\frac{1}{2}}\lambda^2} K_T^2\left( \frac{t-\tau}{\lambda} \right) \int_{-\infty}^{+\infty}K_S^2(y)\left( p(\theta, \tau) + o(1) \right)dyd\tau - \nonumber \\
		&\qquad\qquad \left( \int \frac{1}{\lambda} K_T\left( \frac{t-\tau}{\lambda} \right) \int_{-\infty}^{+\infty}K_S(y)\left( p(\theta, \tau) + o(1) \right)dyd\tau \right)^2 \bigg) \label{eq:16}\\
		& = \frac{1}{\bar{N}a_0^2(-\rho)}\bigg( \int \frac{1}{|H|^{\frac{1}{2}}\lambda^2} K_T^2\left( \frac{t-\tau}{\lambda} \right) \left( p(\theta, \tau) + o(1) \right)R(K_S)d\tau - \nonumber \\
		&\qquad\qquad \left( \int \frac{1}{\lambda} K_T\left( \frac{t-\tau}{\lambda} \right)\left( p(\theta, \tau) + o(1) \right)d\tau \right)^2 \bigg) \label{eq:17}\\
		& = \frac{1}{\bar{N}a_0^2(-\rho)}\bigg( \int_{-\rho}^{1} \frac{1}{|H|^{\frac{1}{2}}\lambda^2} K_T^2\left( \frac{t-\tau}{\lambda} \right) \left( p(\theta, \tau) + o(1) \right)R(K_S)d\tau - \nonumber \\
		&\qquad\qquad \left( \int_{-\rho}^{1} \frac{1}{\lambda} K_T\left( \frac{t-\tau}{\lambda} \right)\left( p(\theta, \tau) + o(1) \right)d\tau \right)^2 \bigg) \label{eq:18}\\
		& = \frac{1}{\bar{N}a_0^2(-\rho)}\bigg( \int_{1}^{-\rho} -\frac{1}{|H|^{\frac{1}{2}}\lambda} K_T^2(v) \left( p(\theta, t-\lambda v) + o(1) \right)R(K_S)dv - \nonumber \\
		&\qquad\qquad \left( \int_{1}^{-\rho} - K_T(v)\left( p(\theta, t-\lambda v) + o(1) \right)dv \right)^2 \bigg) \label{eq:19}\\
		& = \frac{1}{\bar{N}a_0^2(-\rho)}\bigg( \int_{-\rho}^{1} \frac{1}{|H|^{\frac{1}{2}}\lambda} K_T^2(v) \left( p(\theta, t) + o(1) \right)R(K_S)dv - \nonumber \\
		&\qquad\qquad \left( \int_{-\rho}^{1} K_T(v)\left( p(\theta, t) + o(1) \right)dv \right)^2 \bigg) \label{eq:20}\\
		& = \frac{1}{\bar{N}a_0^2(-\rho)} \left( \frac{p(\theta,t)+o(1)}{|H|^{\frac{1}{2}}\lambda}R(K_S)b_{K_T}(-\rho)-\big(a_0(-\rho)(p(\theta,t)+o(1))\big)^2 \right) \label{eq:21}\\
		& = \frac{p(\theta, t)R(K_S)b_{K_T}(-\rho)}{\bar{N}|H|^{\frac{1}{2}}\lambda a_0^2(-\rho)} + O\left(\frac{1}{\bar{N}|H|^{\frac{1}{2}}\lambda}\right) \\
		& = O\left(\frac{1}{\bar{N}|H|^{\frac{1}{2}}\lambda}\right) \quad\rightarrow\quad \exists~ C_1:\mathbb{V}ar[\hat{p}(\theta, t)|\mathcal{M}] \le \frac{C_1}{\bar{N}|H|^{\frac{1}{2}}\lambda}, \label{eq:23}
	\end{align}
	where in (\ref{eq:15}) we performed a change of variable, $y=H^{-\frac{1}{2}}(\theta-x)$, in (\ref{eq:16}) we used the following Taylor expansion: 
		\[
		p(\theta - H^{\frac{1}{2}}y, \tau) = p(\theta,\tau) + o(1),
		\]
	in \ref{eq:17} we used Definition \ref{def:3}, in \ref{eq:18} we considered the fact that $t \in B_r$ as we have done in \ref{eq:7} and \ref{eq:8} of the proof of \ref{lma:estimator_consistency}, in \ref{eq:19} we performed a change of variable, $\frac{t-\tau}{\lambda}=v$, in \ref{eq:20} we used the following Taylor expansion:
		\[
		p(\theta,t-\lambda v)= p(\theta,t) + o(1),
		\]
	whereas in \ref{eq:21} we have used Definition \ref{def:4}. Finally, in \ref{eq:23} we have used the fact that $p(\theta,t)$ has bounded derivatives and is a pdf, therefore it has finite supremum.
\end{proof}

\begin{restatable}[Bound on the absolute values]{lemma}{absolutevalue}\label{lma:absolute_value}
	Let $t \in B_r = \left \{\tau: 1-\lambda \le \tau \le 1\right \}$ then under assumptions of Theorem \ref{thm:convergence}:
	$\hat{p}(\theta, t)-\mathbb{E}[\hat{p}(\theta, t)|\mathcal{M}]$ is the sum of $\bar{N}$ independent random variables, denoted as $v_i$, with zero mean and absolute values bounded by $\frac{C_2}{\bar{N}|H|^{\frac{1}{2}}\lambda}$. $\mathcal{M}$ represents all the discrete random variables $M_i$ for $i=1 \ldots n$.
\end{restatable}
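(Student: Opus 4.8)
The plan is to bound the full deviation $\hat{p}(\theta,t)-\mathbb{E}[\hat{p}(\theta,t)|\mathcal{M}]$ directly, working conditionally on $\mathcal{M}=\{M_i\}_{i=1}^{n}$. Conditioning freezes every $M_i$, so in Equation~(\ref{eq:TVKDE}) the prefactor $\frac{1}{a_0(-\rho)N(t)\lambda|H|^{1/2}}$ and each temporal weight $K_T(\frac{t-t_i}{\lambda})$ are deterministic, and the only randomness sits in the spatial evaluations $K_S(H^{-1/2}(\theta-\theta_{ij}))$, which are mutually independent across $(i,j)$. I would therefore first write the deviation as the centered double sum
\[
\hat{p}(\theta,t)-\mathbb{E}[\hat{p}(\theta,t)|\mathcal{M}] = \frac{1}{a_0(-\rho)N(t)\lambda|H|^{\frac{1}{2}}}\sum_{t_i=\frac{1}{n}}^{t}K_T\!\left(\frac{t-t_i}{\lambda}\right)\sum_{j=1}^{M_i}\Big(K_S(H^{-\frac{1}{2}}(\theta-\theta_{ij}))-\mathbb{E}\big[K_S(H^{-\frac{1}{2}}(\theta-\theta_{ij}))\big]\Big),
\]
where the inner expectation is the per-observation spatial mean (exactly the object already computed in Lemma~\ref{lma:estimator_consistency}).

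The second step is to pass the triangle inequality through both sums and to control each kernel uniformly. Assumption~\ref{ass:spatial_kernel} gives $K_S\ge 0$, $\int K_S=1$ and $\|\theta\|^{p}K_S(\theta)\to 0$, and the hypothesis of Theorem~\ref{thm:convergence} that $K_S$ has a bounded, H\"older-continuous derivative then forces $\kappa_S:=\sup_{\theta}K_S(\theta)<\infty$; consequently every centered spatial increment is bounded by $2\kappa_S$. Likewise $K_T$ is H\"older-continuous with compact support, so $\kappa_T:=\sup_{t}|K_T(t)|<\infty$. Inserting these envelopes yields
\[
\big|\hat{p}(\theta,t)-\mathbb{E}[\hat{p}(\theta,t)|\mathcal{M}]\big| \le \frac{2\kappa_S\kappa_T}{a_0(-\rho)N(t)\lambda|H|^{\frac{1}{2}}}\sum_{i\in S_t}M_i,
\]
where $S_t$ is the set of time indices with $K_T(\frac{t-t_i}{\lambda})\neq 0$.

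The decisive structural input is the compact support of $K_T$: for $t\in B_r$ the argument $\frac{t-t_i}{\lambda}$ stays in that support only for $t_i$ inside a boundary window of width $O(\lambda)$, so $S_t$ collects just the most recent observations. Localizing the double sum to this window and absorbing the resulting normalized active contribution together with $\frac{2\kappa_S\kappa_T}{a_0(-\rho)}$ into a single constant $C_2$ delivers the asserted bound $\frac{C_2}{N(t)|H|^{1/2}\lambda}$, valid with probability $1$ since the moment hypotheses on the $M_i$ make $\sum_{i\in S_t}M_i$ almost surely finite. I expect the main obstacle to be exactly this final bookkeeping: the active count $\sum_{i\in S_t}M_i$ is itself random and of order $n\lambda$, so the delicate point is to argue that it combines with the $N(t)^{-1}$ normalization into a constant $C_2$ independent of $n$. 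This envelope is precisely what must be paired with the conditional variance estimate of Lemma~\ref{lma:estimator_variance} in the Bernstein-type concentration step that underlies the uniform rate of Theorem~\ref{thm:convergence}.
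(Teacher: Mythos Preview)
Your reading of the lemma is the source of the trouble. As written, the inequality looks like a deterministic envelope on the full centered estimator, and you correctly try to obtain it by passing absolute values through the double sum. But then your own bookkeeping exposes the obstruction: the active window $S_t$ contributes $\sum_{i\in S_t}M_i$ summands, which is of order $n\lambda m$, while $N(t)$ is of order $nm$. After cancellation you are left with a bound of order $|H|^{-1/2}$, not $\big(N(t)|H|^{1/2}\lambda\big)^{-1}$; the two differ by a factor of roughly $N(t)\lambda$, which diverges under the bandwidth conditions of Theorem~\ref{thm:convergence}. No bookkeeping will close that gap, because a deterministic bound of order $\big(N(t)|H|^{1/2}\lambda\big)^{-1}\to 0$ on the full centered random sum is simply false.

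What the lemma is actually supplying (and what the paper's proof actually shows) is the \emph{per-summand} envelope needed in the Bernstein step of Theorem~\ref{thm:convergence}. Writing $\hat p(\theta,t)=\sum_{i,j}Z_{ij}$ with
\[
Z_{ij}=\frac{1}{a_0(-\rho)\,N(t)\,\lambda\,|H|^{1/2}}\,K_T\!\Big(\tfrac{t-t_i}{\lambda}\Big)\,K_S\!\Big(H^{-1/2}(\theta-\theta_{ij})\Big),
\]
Bernstein's inequality requires $|Z_{ij}-\mathbb{E}[Z_{ij}\mid\mathcal{M}]|\le M$, and $M=\tfrac{C_2}{N(t)|H|^{1/2}\lambda}$ is exactly the quantity that appears multiplied by $\tfrac{1}{3}\xi$ in the exponent of the concentration bound. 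The paper's equation~(\ref{eq:24}) displays a single generic term, not the sum, and the inequality then follows in one line from $\sup K_T<\infty$ (compact support, H\"older) and $\sup K_S<\infty$, together with Lemma~\ref{lma:estimator_consistency} to handle the expectation. Your $\kappa_S,\kappa_T$ argument is precisely the right ingredient; you just need to apply it to a single $Z_{ij}$ rather than to the whole sum, and then the factor $\sum_{i\in S_t}M_i$ never appears.
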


\begin{proof}
	\begin{align}
		&|v_i|=\bigg| \frac{1}{\bar{N} \lambda|H|^{\frac{1}{2}} a_0(-\rho)} K_T\left(\frac{t-t_i}{\lambda}\right) K_S(H^{-\frac{1}{2}}(\theta-x_{ij}))- \nonumber \\ 
		&\qquad\qquad \frac{p(\theta,t) + O(\lambda) + O(tr(H))}{\bar{N}} \bigg| \label{eq:24}\\
		& \le \left| \frac{M_TM_S}{\bar{N} \lambda|H|^{\frac{1}{2}} a_0(-\rho)} - \frac{p(\theta,t) + O(\lambda) + O(tr(H))}{\bar{N}} \right| \label{eq:25}\\
		& = O\left(\frac{1}{\bar{N} \lambda|H|^{\frac{1}{2}}}\right) ~ \rightarrow \exists C_2: ~ 	\left| v_i \right| \le \frac{C_2}{\bar{N}|H|^{\frac{1}{2}}\lambda},
	\end{align}
	where in \ref{eq:24} we used \textit{lemma} \ref{lma:estimator_consistency} and in \ref{eq:25} we used the fact that $K_T$ has a compact support on $\mathbb{R}$ and $K_S$ has a supremum.
\end{proof}
Now the proof of \textit{Theorem} \ref{thm:convergence} can follow. 
\begin{proof}
	Let $\xi = C \left(\frac{\log n}{\bar{N}|H|^{\frac{1}{2}}\lambda}\right)^\frac{1}{2}$ and $C_3 = \frac{1}{\max \left(C_1, \frac{C_2}{3}\right)}$, using Bernstein's inequality we can write:
	\begin{align}
		&\mathbb{P}\left( \left| \hat{p}(\theta, t)-\mathbb{E}[\hat{p}(\theta, t) |\mathcal{M}]\right| > \xi |\mathcal{M} \right) \le 2 \exp \left( - \frac{\frac{1}{2}\xi^2}{\frac{C_1}{\bar{N}|H|^{\frac{1}{2}}\lambda} + \frac{1}{3}\frac{C_2\xi}{\bar{N}|H|^{\frac{1}{2}}\lambda}} \right) \\
		& = 2\exp \left( - \frac{\frac{1}{2}C^2 \log n}{C_1 + \frac{1}{3}C_2\xi} \right) \le 2\exp \left( - \frac{C_3C^2 \log n}{1 + \xi} \right), \forall (\theta, t).
	\end{align}
	Therefore, if $C_4>0$ is given, and we choose $C^2>\frac{3C_4}{C_3}$, then we can write:
	\begin{align}
		&\sup_{(\theta, t) ~\in~ \mathbb{R}^p\times \mathcal{I}} \mathbb{P}\left( \left| \hat{p}(\theta, t)-\mathbb{E}[\hat{p}(\theta, t)|\mathcal{M}] \right| > C \left(\frac{\log n}{\bar{N}|H|^{\frac{1}{2}}\lambda}\right)^\frac{1}{2} \right) \le 2 \exp\left( -\frac{3C_4 \log n}{1+\xi} \right) \\
		& = 2 n^{ -\frac{3C_4}{1+\xi} }.
	\end{align}
	Now restricting to finite subsets $\mathcal{K}_n \subset \mathcal{K} \subset \mathbb{R}^p$ and $\mathcal{I}_n\subset \mathcal{I}$ where $\mathcal{K}_n \times \mathcal{I}_n$ has at most $\lfloor n^{\frac{2C_4}{1+\xi}} \rfloor$ elements, we have:
	%remember that the probability of the supremum being greater than something is smaller than the probability of at least one (\theta, t) being greater than that something. Moreover, $2n^{\frac{-3C_4}{1+\xi}} \left\lfloor n^{\frac{2C_4}{1+\xi}}\right\rfloor \le 2n^{\frac{-C_4}{1+\xi}}$
	
	\begin{align}
		\mathbb{P}\left( \sup_{(\theta, t) ~\in~ \mathcal{K}_n\times \mathcal{I}_n} \left| \hat{p}(\theta, t)-\mathbb{E}[\hat{p}(\theta, t)|\mathcal{M}] \right| > C \left(\frac{\log n}{\bar{N}|H|^{\frac{1}{2}}\lambda}\right)^\frac{1}{2} \right) \le 2n^{\frac{-C_4}{1+\xi}}.\label{eq:34}
	\end{align}
	From the H$\ddot{o}lder$-continuity of the estimator (since the two kernels have bounded first derivative):
	\begin{align}
		&\sup_{(\theta, t) ~\in~ \mathcal{K}\times \mathcal{I}}\left\{ \left|\hat{p}(\theta, t)-\mathbb{E}[\hat{p}(\theta, t)|\mathcal{M}] \right|\right\} - \sup_{(\theta, t) ~\in~ \mathcal{K}_n\times \mathcal{I}_n} \left\{\left|\hat{p}(\theta, t)-\mathbb{E}[\hat{p}(\theta, t)|\mathcal{M}] \right|\right\} = \nonumber\\
		&\left|\sup_{(\theta, t) ~\in~ \mathcal{K}\times \mathcal{I}}\left\{ \left|\hat{p}(\theta, t)-\mathbb{E}[\hat{p}(\theta, t)|\mathcal{M}] \right|\right\} - \sup_{(\theta, t) ~\in~ \mathcal{K}_n\times \mathcal{I}_n} \left\{\left|\hat{p}(\theta, t)-\mathbb{E}[\hat{p}(\theta, t)|\mathcal{M}] \right|\right\}\right| \le \nonumber\\
		&\quad \left|\sup_{(\theta, t) ~\in~ \mathcal{K}\times \mathcal{I}}\left\{ \hat{p}(\theta, t)-\mathbb{E}[\hat{p}(\theta, t)|\mathcal{M}] \right\} - \sup_{(\theta, t) ~\in~ \mathcal{K}_n\times \mathcal{I}_n}\left\{ \hat{p}(\theta, t)-\mathbb{E}[\hat{p}(\theta, t)|\mathcal{M}]\right\} \right| \le \nonumber\\
		&\quad D||v^*-v_n^*||^{\alpha}
		  ,
	\end{align}
	where 
	\begin{align*}
		&v^*=\arg \sup_{(\theta, t) ~\in~ \mathcal{K}\times \mathcal{I}} \left\{\hat{p}(\theta, t)-\mathbb{E}[\hat{p}(\theta, t)|\mathcal{M}] \right\} \\
		&v_n^*=\arg \sup_{(\theta, t) ~\in~ \mathcal{K}_n\times \mathcal{I}_n} \left\{\hat{p}(\theta, t)-\mathbb{E}[\hat{p}(\theta, t)|\mathcal{M}] \right\},
	\end{align*}
	therefore:
	
	\begin{align}
		&\mathbb{P}\left( \sup_{(\theta, t) ~\in~ \mathcal{K}\times \mathcal{I}} \left|\hat{p}(\theta, t)-\mathbb{E}[\hat{p}(\theta, t)|\mathcal{M}] \right| > C \left(\frac{\log n}{\bar{N}|H|^{\frac{1}{2}}\lambda}\right)^\frac{1}{2} + D||v^*-v_n^*||^{\alpha} \right) \le \nonumber \\
		&\qquad\qquad \mathbb{P}\left( \sup_{(\theta, t) ~\in~ \mathcal{K}_n\times \mathcal{I}_n} \left|\hat{p}(\theta, t)-\mathbb{E}[\hat{p}(\theta, t)|\mathcal{M}] \right| > C \left(\frac{\log n}{\bar{N}|H|^{\frac{1}{2}}\lambda}\right)^\frac{1}{2} \right)	
	\end{align}
	
	now, for sufficiently large $C_4$, $||v^*-v_n^*||\le \frac{\sqrt{p+1}}{2} \sqrt[p+1]{\frac{(K_{max}-K_{min})^p(I_{max}-I_{min})}{\left\lfloor n^{\frac{2C_4}{1+\xi}}\right\rfloor}}$ and $D\left(\frac{\sqrt{p+1}}{2} \sqrt[p+1]{\frac{(K_{max}-K_{min})^p(I_{max}-I_{min})}{\left\lfloor n^{\frac{2C_4}{1+\xi}}\right\rfloor}}\right)^{\alpha}$ is negligible \wrt $\xi$ as $n$ tends to infinity, where $K_{max}$ e $K_{min}$ are the endpoints for each dimension of $\mathcal{K}$ (we assume them to be the same in each dimension for the sake of simplicity). Analogously for $I_{max}$ and $I_{min}$ (notice that $\mathcal{I}$ is monodimensional).
	
	Therefore:
	\begin{align}
	&\mathbb{P}\bigg( \sup_{(\theta, t) ~\in~ \mathcal{K}\times \mathcal{I}} \left|\hat{p}(\theta, t)-\mathbb{E}[\hat{p}(\theta, t)|\mathcal{M}] \right| > C \left(\frac{\log n}{\bar{N}|H|^{\frac{1}{2}}\lambda}\right)^\frac{1}{2} + \nonumber \\ 
	&\qquad\qquad D\left(\frac{p+1}{4}\right)^{\frac{\alpha}{2}}\left(\frac{(K_{max}-K_{min})^p(I_{max}-I_{min})}{\left\lfloor n^{\frac{2C_4}{1+\xi}}\right\rfloor}\right)^{\frac{\alpha}{p+1}} \bigg) \le \nonumber \\
	&\qquad\qquad \mathbb{P}\left( \sup_{(\theta, t) ~\in~ \mathcal{K}_n\times \mathcal{I}_n} \left|\hat{p}(\theta, t)-\mathbb{E}[\hat{p}(\theta, t)|\mathcal{M}] \right| > C \left(\frac{\log n}{\bar{N}|H|^{\frac{1}{2}}\lambda}\right)^\frac{1}{2} \right) \label{eq:37}.
	\end{align}
	From (\ref{eq:34}) and (\ref{eq:37}), we can write:
	\begin{align}
		&\mathbb{P}\bigg( \sup_{(\theta, t) ~\in~ \mathcal{K}\times \mathcal{I}} \left|\hat{p}(\theta, t)-\mathbb{E}[\hat{p}(\theta, t)|\mathcal{M}] \right| < C \left(\frac{\log n}{\bar{N}|H|^{\frac{1}{2}}\lambda}\right)^\frac{1}{2} + \nonumber \\
		&\qquad\qquad D\left(\frac{p+1}{4}\right)^{\frac{\alpha}{2}}\left(\frac{(K_{max}-K_{min})^p(I_{max}-I_{min})}{\left\lfloor n^{\frac{2C_4}{1+\xi}}\right\rfloor}\right)^{\frac{\alpha}{p+1}} \bigg) \ge 1-2n^{\frac{-C_4}{1+\xi}}
	\end{align}
	
	Therefore, as $n \rightarrow \infty$ with probability $1$:
	\begin{align}
		&\left| \hat{p}(\theta, t)-p(\theta, t) - O(\lambda) - O(tr(|H|))  \right|= O \bigg[C\left(\frac{\log n}{\bar{N}|H|^{\frac{1}{2}}\lambda}\right)^\frac{1}{2}  + \nonumber \\ &\qquad\qquad D\left(\frac{p+1}{4}\right)^{\frac{\alpha}{2}}\left(\frac{(K_{max}-K_{min})^p(I_{max}-I_{min})}{\left\lfloor n^{\frac{2C_4}{1+\xi}}\right\rfloor}\right)^{\frac{\alpha}{p+1}} \bigg], \forall~ (\theta, t)~ \in~ \mathcal{K} \times \mathcal{I} 
	\end{align}
	Finally, we get:
	\begin{align}
		\hat{p}(\theta, t) = p(\theta, t) + O\left[\left(\frac{\log n}{\bar{N}|H|^{\frac{1}{2}}\lambda}\right)^\frac{1}{2} + \lambda + tr(H) \right], \forall~ (\theta, t)~ \in~ \mathcal{K} \times \mathcal{I} 
	\end{align}
	
\end{proof}
\section{Upper bound on the KL-Divergence between the prior and the posterior}\label{appx:KL-UB}
In this section, we report the steps needed to get an upper bound on the KL-Divergence between the posterior $q$ our prior $\hat{p}$. Let us define $S=\frac{1}{a_0(-\rho) \bar{N} \lambda }\sum_{i=1}^{n}\sum_{j=1}^{M_i}K_T(\frac{t-t_i}{\lambda})$, hence:
\begin{align}
	&D_{KL}(q||\hat{p}(\cdot, t)) = \int q(\theta) log\frac{q(\theta)}{\hat{p}(\theta,t)}d\theta= \int q(\theta) log\frac{q(\theta)}{\frac{S}{S}\hat{p}(\theta,t)}d\theta \\
	& = \int q(\theta) log\frac{q(\theta)}{\frac{1}{S a_0(-\rho) \bar{N} |H|^{\frac{1}{2}} \lambda }\sum_{i= 1}^{n}K_T(\frac{t-t_i}{\lambda})
		\sum_{j=1}^{M_i}K_S(H^{-\frac{1}{2}}(\theta-\theta_{ij}))}d\theta + \nonumber\\
	&\qquad\qquad \int q(\theta) \log \frac{1}{S} d\theta \label{eq:42}
\end{align}
Now the first term in Equation (\ref{eq:42}) is the KL-Divergence between two Mixture of Gaussians, which can be upper bounded using the same procedure as in \cite{hershey2007approximating}, and the second term is a constant in the ELBO optimization. Therefore:
\begin{align}
	D_{KL}(q||\hat{p}(\cdot, t)) \le D_{KL}(\chi^{(2)}||\chi^{(1)}) + \log\frac{1}{S} + \sum_{i,j}\chi^{(2)}_{j,i}D_{KL}(f_i^{q}||f_j^{\hat{p}}),\label{eq:KL-Bound}
\end{align}
where we are rewriting $q=\sum_{i}c_i^qf_i^{q}$ and $\hat{p}=\sum_{j}c_j^{\hat{p}}f_j^{\hat{p}}$ with $c_x^y$ being a generic weight and $f_x^y=\mathcal{N}(\mu_x^y, \Sigma_x^y)$ being a generic component, $(x,y) \in \{(i,q), (j,\hat{p})\}$. Furthermore, we have:
\begin{align}
	\chi_{i,j}^{(1)}=\frac{c_j^{\hat{p}}\chi_{j,i}^{(2)}}{\sum_{i'}\chi_{j,i'}^{(2)}},\qquad \chi_{j,i}^{(2)} = \frac{c_i^{(q)}\chi_{i,j}^{(1)}e^{-D_{KL}(f_i^{q}||f_j^{\hat{p}})}}{\sum_{j'}\chi_{i,j'}^{(1)}e^{-D_{KL}(f_i^{q}||f_{j'}^{\hat{p}})}}.
\end{align}
Finally, notice that $c_i^q=\frac{1}{C}$ for each $i$, where $C$ is the number of components for the posterior, whereas $c_j^{\hat{p}}=\frac{1}{S a_0(-\rho) \bar{N} \lambda }K_T(\frac{t-t_i}{\lambda})$, with a little abuse of notation over the index $i$ and $j$.

\section{Proof of Theorem 4.1}\label{appx:FS-analysis}
The prof of Theorem \ref{thm:mellowBellmanError} is straightforward, we just need to follow the same procedure of \cite{tirinzoni2018transfer} plugging in the bound on the KL-Divergence of Equation \ref{eq:KL-Bound}. In the following we report the proof for completeness.

\begin{proof}
	We start from Lemma 2 of \cite{tirinzoni2018transfer} with variational parameter $\hat{\xi}=(\hat{\mu}_1,\ldots,\hat{\mu}_C,\hat{\Sigma}_1,\ldots,\hat{\Sigma}_C)$, whereas, for the right-hand side, we set $\mu_i=\theta^*$ and $\Sigma_i=cI$ for each $i=1,\ldots,C$, for some $c>0$:
	\begin{align}
		&\mathbb{E}_{q_{\hat{\xi}}}\left[\left|\left|\tilde{B}_\theta\right|\right|_\nu^2\right] \le \inf_{\xi \in \Xi} \left\{ \mathbb{E}_{q_{\xi}}\left[\left|\left|\tilde{B}_\theta\right|\right|_\nu^2\right] + \mathbb{E}_{q_{\hat{\xi}}}\left[\varv(\theta)\right] + 2\frac{\psi}{N}D_{KL}(q_{\xi}||\hat{p}) \right\} + 8\frac{R_{max}^2}{(1-\gamma)^2}\sqrt{\frac{\log\frac{2}{\delta}}{2N}} \nonumber \\
		&\qquad\qquad \le  \mathbb{E}_{\mathcal{N}(\theta^*,cI)}\left[\left|\left|\tilde{B}_\theta\right|\right|_\nu^2\right] + \mathbb{E}_{\mathcal{N}(\theta^*,cI)}\left[\varv(\theta)\right] + 2\frac{\psi}{N}D_{KL}(\mathcal{N}(\theta^*,cI)||\hat{p}) + \nonumber\\ &\qquad\qquad 8\frac{R_{max}^2}{(1-\gamma)^2}\sqrt{\frac{\log\frac{2}{\delta}}{2N}}. \label{eq:48}
	\end{align}
	From Appendix \ref{appx:KL-UB} we have:
	\begin{align}
		&D_{KL}(\mathcal{N}(\theta^*,cI)||\hat{p}) \le \nonumber\\ 
		&\qquad\qquad D_{KL}(\chi^{(2)}||\chi^{(1)}) + \log\frac{1}{S} + \sum_{j}\chi_j^{(2)}D_{KL}(\mathcal{N}(\theta^*,cI) || \mathcal{N}(\theta_j,\sigma^2I)),\label{eq:49}
	\end{align}
	where
	\begin{align}
		\chi_j^{(1)} = c_j^{\hat{p}}, \qquad \chi_j^{(2)} = \frac{c_j^{\hat{p}}e^{-D_{KL}(\mathcal{N}(\theta^*,cI)||\mathcal{N}(\theta_j,\sigma^2I))}}{\sum_{j'}c_{j'}^{\hat{p}}e^{-D_{KL}(\mathcal{N}(\theta^*,cI)||\mathcal{N}(\theta_{j'},\sigma^2I))}}
	\end{align}
	obtained noticing that we can remove the index $i$ because we have reduced the posterior to one component. $\chi_j^{(2)}$ can be rewritten:
	\begin{align}
		 \chi_j^{(2)} = \frac{c_j^{\hat{p}}e^{-\frac{1}{2\sigma^2}||\theta^*-\theta_j||}}{\sum_{j'}c_{j'}^{\hat{p}}e^{-\frac{1}{2\sigma^2}||\theta^*-\theta_{j'}||}}
	\end{align}
	if we plug in the closed form expression of the KL-Divergence (\ref{eq:52}) into its definition.
	\begin{align}
		D_{KL}(\mathcal{N}(\theta^*,cI)||\mathcal{N}(\theta_j,\sigma^2I)) = \frac{1}{2}\left(p\log\frac{\sigma^2}{c} + p\frac{c}{\sigma^2} + \frac{||\theta^*-\theta_j||}{\sigma^2}-p\right)\label{eq:52}.
	\end{align}
	Now we proceed upper bounding the first and then the third term of \ref{eq:49}:
	\begin{align}
		&D_{KL}(\chi^{(2)}||\chi^{(1)}) = \sum_{j} \chi^{(2)}_j \log \frac{\chi^{(2)}_j}{\chi^{(1)}_j} \\
		&= \sum_{j} \chi^{(2)}_j \log \chi^{(2)}_j - \sum_{j} \chi^{(2)}_j \log \chi^{(1)}_j \label{eq:54}\\
		& \le \sum_{j} \chi^{(2)}_j \log \frac{1}{c_j^{\hat{p}}} \label{eq:55}
	\end{align}
	where we got \ref{eq:55} just noticing in \ref{eq:54} that the first term is negative. Considering the third term, we have:
	\begin{align}
		\sum_{j}\chi_j^{(2)}D_{KL}(\mathcal{N}(\theta^*,cI) || \mathcal{N}(\theta_j,\sigma^2I)) &= \frac{1}{2}\sum_{j}\chi_j^{(2)} \left(p\log\frac{\sigma^2}{c} + p\frac{c}{\sigma^2} + \frac{||\theta^*-\theta_j||}{\sigma^2}-p\right) \nonumber\\
		&\le \frac{1}{2}p\log\frac{\sigma^2}{c} + \frac{1}{2}p\frac{c}{\sigma^2} + \sum_{j}\chi_j^{(2)}\frac{||\theta^*-\theta_j||}{2\sigma^2}.
	\end{align}
	Therefore:
	\begin{align}
		&D_{KL}(\mathcal{N}(\theta^*,cI)||\hat{p}) \le\nonumber \\
		&\qquad\qquad \sum_{j} \chi^{(2)}_j \log \frac{1}{c_j^{\hat{p}}} + \log\frac{1}{S} + \frac{1}{2}p\log\frac{\sigma^2}{c} + \frac{1}{2}p\frac{c}{\sigma^2} + \sum_{j}\chi_j^{(2)}\frac{||\theta^*-\theta_j||}{2\sigma^2}.
	\end{align}
	Now leveraging the above equation, the following upper bound obtained in the proof of Theorem 3 in \cite{tirinzoni2018transfer}:
	\begin{align}
		\mathbb{E}_{\mathcal{N}(\theta^*,cI)}\left[\left|\left|\tilde{B}_\theta\right|\right|_\nu^2\right] \le 2\left|\left|\tilde{B}_{\theta^*}\right|\right|_\nu^2 + \frac{1}{2}\gamma^2\kappa^2c^2\phi_{max}^4+c(\theta_{max}\phi_{max}(1+\gamma))^2,
	\end{align}
	and setting c=$\frac{1}{N}$ (since the bound hold for any constant parameter $c>0$), $c_1=\frac{8R_{max}^2}{\sqrt{2}(1-\gamma)^2}$, $c_2=\theta_{max}^2\phi_{max}^2(1-\gamma)^2+\psi p \log\sigma^2 + 2\psi\sum_j \chi_j^{(2)}\log\frac{1}{c_j^{\hat{p}}}+2\psi\log\frac{1}{S}$, $c_3=\frac{1}{2}\gamma^2\kappa^2\phi_{max}^4 + \frac{\psi p}{\sigma^2}$ and $\varphi(\Theta_s)=\frac{1}{\sigma^2}\sum_{j}\chi_j^{(2)}||\theta^*-\theta_j||$, we can rewrite Equation (\ref{eq:48}) in the following way:
	\begin{align}
		\mathbb{E}_{q_{\hat{\xi}}}\left[\left|\left|\tilde{B}_\theta\right|\right|_\nu^2\right] \le 2\left|\left|\tilde{B}_{\theta^*}\right|\right|_\nu^2 +v(\theta^*) + c_1\sqrt{\frac{\log\frac{2}{\delta}}{N}} + \frac{c_2 +\psi p \log N +\psi\varphi(\Theta_s)}{N} + \frac{c_3}{N}
	\end{align}
\end{proof}

\section{Experimental Details}\label{appx:experiments}
In this section, we provide some additional experimental details together with further results. 
\subsection{Parametrization}
ADAM \cite{kingma2014adam} is used in every experiment as optimizer. The source tasks are solved by a direct minimization of the TD error as described in section 3.4 of \cite{tirinzoni2018transfer}, using a \textit{batch size} of $50$ for the rooms environments and of $32$ for Mountain Car, a \textit{buffer size} of $50000$, the projection parameter of the mellow-max TD error gradient set to $0.5$, the learning rate $\alpha=10^{-3}$. The exploration is $\epsilon$-greedy with $\epsilon$ linearly decaying from $1$ to $0.01$ for Mountain Car and to $0.02$ for the rooms environments. Both decays happen within $50\%$ of the maximum number of learning iterations.

In the \textbf{rooms} environments, for what concern the two transfer algorithms, $c$-T2VT, and $c$-MGVT, we have the following parametrization: \textit{batch size} of $50$, \textit{buffer size} of $50000$, projection parameter of the mellow-max TD error gradient set to $0.5$ (see section 3.4 of \cite{tirinzoni2018transfer}), the parameter of Equation (\ref{eq:ELBO}) $\psi=10^{-6}$, $10$ weights to estimate the expected TD error, the learning rates are set to $\alpha_\mu=10^{-3}$ and $\alpha_L=0.1$ for the mean and the Cholesky factor $L$ of the posterior (moreover, the minimum eigenvalue reachable by $L$ is set to $\sigma^2_{min}=10^{-4}$). Finally, for the prior, we use a diagonal isotropic matrix $H=10^{-5}I$ and $\lambda=0.3333$ in the context of $c$-T2VT, furthermore, we have  $\Sigma=10^{-5}I$ for the prior in the context of $c$-MGVT.

In the \textbf{Mountain Car} environment, $c$-T2VT and $c$-MGVT are parametrized in the following way: \textit{batch size} of $500$, \textit{buffer size} of $10000$, projection parameter of the mellow-max TD error gradient  set to $0.5$, the parameter of Equation (\ref{eq:ELBO}) $\psi=10^{-4}$, $10$ weights to estimate the expected TD error, the learning rates are set to $\alpha_\mu=10^{-3}$ and $\alpha_L=10^{-4}$ for the mean and the Cholesky factor $L$ of the posterior (moreover, the minimum eigenvalue reachable by $L$ is set to $\sigma^2_{min}=10^{-4}$). Finally, for the prior, we use a diagonal isotropic matrix $H=10^{-5}I$ and $\lambda=0.3333$ in the context of $c$-T2VT, furthermore, we have $\Sigma=10^{-5}I$ for the prior in the context of $c$-MGVT.

\subsection{Temporal Dynamics}
In this section, we provide the analytical form of the different dynamics employed in our experiments. Notice that these dynamics need to be plugged into the mean of our Gaussian distribution from where we sample the parametrization defining the task (for the rooms environment we will sample the positions of the doors, whereas, for the Mountain Car environment, we will sample the base speed).
\begin{itemize}
	\item \textbf{Linear}: $2t-1,~t~\in~[0,1]$;
	\item \textbf{Polynomial}: $ at^4 + bt^3 + ct^2 + dt + e, ~t~\in~[0,1]$ and $a = -15.625$, $b = 39.5833$, $c = -31.875$, $d = 9.91667$ and $e = -1$;
	\item \textbf{Sinusoidal}: $\sin(2\pi t),~t~\in~[0,1]$.
\end{itemize}
In Figure \ref{fig:dynamics}, we report the graphical representation of the above analytical functions.

Now, given the range for a parameter $[k_{min},k_{max}]$, a given dynamic will span over this interval in the following way: $d(t)\frac{(k_{max}-k_{min})}{2}+\frac{(k_{max}+k_{min})}{2}$. Finally, notice that, $[k_{min},k_{max}]=[0.001, 0.0015]$ for Mountain Car, whereas $[k_{min},k_{max}]= [0.7 + padding, 9.3 - padding]$ for the parameters of the rooms environments. The $padding$ variable is $0$ for the 2-rooms, whereas is $2$ for the 3-rooms environments. This $padding$ variable was necessary in the 3-rooms environments in order for the TD gradient algorithm to be able to solve the source tasks in every configuration of the two doors.

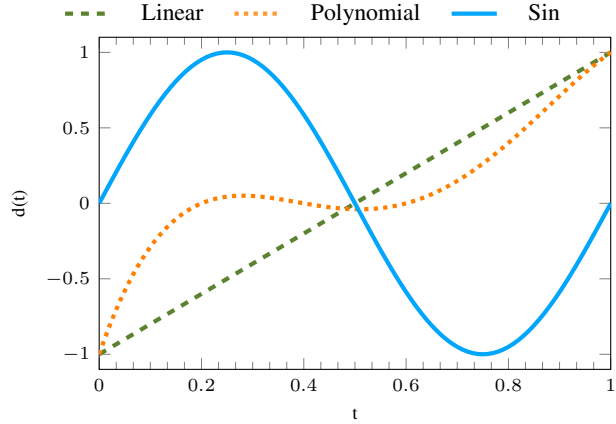
\begin{figure*}[!b]
	\centering
	%
	%\captionsetup{width=\textwidth}
	%
	\begin{tikzpicture}
	\begin{customlegend}[legend columns=8,legend style={align=left,draw=none,column sep=2ex,font=\footnotesize},legend entries={Linear, Polynomial, Sin}]
	%
	%\addlegendimage{green!40!black, dashdotted, thick}
	%
	%\addlegendimage{empty legend}
	%
	\addlegendimage{green1!60!black,ultra thick,dashed}   
	\addlegendimage{orange!80!white,dotted,ultra thick}
	\addlegendimage{blue1,ultra thick}
	\addlegendimage{violet!90!white,dash dot,ultra thick}
	
	\end{customlegend}
	\end{tikzpicture}

	\begin{tikzpicture}
	\begin{axis}[
	width=0.60\textwidth,
	height=6cm,
	xmin=0,
	xmax=1,
	minor x tick num=5,
	ymin=-1.1,
	ymax=1.1,
	ytick={-1,-0.5,...,1},
	%
	%xmajorgrids,
	%ymajorgrids,
	%grid style={loosely dashed,ultra thin},
	%
	xlabel=t,
	ylabel=d(t),
	mark options={scale=0.2},
	cycle list name = custom,
	]
	\addplot
	table [x=x,y=linear,col sep=comma] 
	{csv/dynamics/dynamics.csv};
	\addplot
	table [x=x,y=polynomial,col sep=comma] 
	{csv/dynamics/dynamics.csv};
	\addplot
	table [x=x,y=sin,col sep=comma] 
	{csv/dynamics/dynamics.csv};

	\end{axis}
	\end{tikzpicture}

	\caption{Temporal dynamics.}
	\label{fig:dynamics}
\end{figure*}

\subsection{$\lambda$-sensitivity results}\label{subsec:lambda-sensitivity-results}
In Figures \ref{fig:2-rooms-lambda-1c} and \ref{fig:2-rooms-lambda-3c}, we report a sensitivity analysis of our algorithm \wrt $\lambda$ in the 2-rooms environment. This analysis is carried out computing the performance of the learning algorithm \wrt different values of the previously mentioned parameter (whereas $H=10^{-5}I$ for every $\lambda$). These results are also compared with the performance of the algorithm when $\lambda$ is chosen according to the likelihood optimization described in Section \ref{subsec:sensitivity}. In Figures \ref{fig:3-rooms-lambda-1c} and \ref{fig:3-rooms-lambda-3c}, we report the above-described analysis in the context of the 3-rooms environment. In general, the performance of the likelihood approach is satisfying, for both $1$-T2VT and $3$-T2VT, even though in some cases it is not optimal. For what concern, the polynomial dynamic this may be due to its plateau (see Figure \ref{fig:dynamics}) which bias the choice for $\lambda$ toward bigger values since the likelihood is evaluated in a cross-validation manner. For the same reason, in the $\sin$ dynamic case, the likelihood-based approach tends to select an average $\lambda$. Finally, the linear case in the 2-rooms is almost optimal, whereas, in the 3-rooms, the performance decreases. This is due to the fact that, in the 3-rooms environment, we have 2 parameters governing the dynamics (the two doors positions) making the choice of $\lambda$ harder to make in this setting.

\textbf{Implementation Details}: since the $\lambda ~\in~ [0,1]$, we performed a grid search in order to optimize Equation \ref{eq:likelihood-optimization} .%and avoid issues with the non-differentiability of the estimator (which are due to the compact support of the temporal kernel).

\input{figure_lambda-sensitivity-2-rooms-1c.tex}

\input{figure_lambda-sensitivity-2-rooms-3c.tex}

\input{figure_lambda-sensitivity-3-rooms-1c.tex}

\input{figure_lambda-sensitivity-3-rooms-3c.tex}
\end{document}